
\documentclass{article}

\usepackage{microtype}
\usepackage{graphicx}
\usepackage{subcaption}
\usepackage{booktabs} 
\usepackage[utf8]{inputenc}
\usepackage{amsmath}
\usepackage{amssymb} 
\usepackage{mhchem}
\usepackage{algorithm}
\usepackage{mathtools}
\usepackage{wrapfig}
\usepackage[table]{xcolor}

\usepackage{xcolor}      
\usepackage{listings}    

\definecolor{codeblue}{rgb}{0.8, 0.2, 0.2} 
\definecolor{codegray}{rgb}{0.5, 0.5, 0.5}   

\lstset{
  language=Python,                
  basicstyle=\ttfamily\small,     
  keywordstyle=\color{codeblue}\bfseries, 
  commentstyle=\color{codegray},    
  stringstyle=\color{red},        
  showstringspaces=false,         
  numbers=none,                   
  frame=none,                     
  tabsize=4,                      
  breaklines=true                 
}

\definecolor{LightBlue}{RGB}{80,160,255}

\usepackage{xcolor}
\PassOptionsToPackage{
  colorlinks=true,
  citecolor=LightBlue,
  linkcolor=LightBlue,
  urlcolor=LightBlue
}{hyperref}



\usepackage[accepted]{icml2026}

\usepackage{hyperref}
\hypersetup{
  colorlinks=true,
  citecolor=LightBlue,
  linkcolor=LightBlue,
  urlcolor=LightBlue
}
\usepackage{amsmath}
\usepackage{amssymb}
\usepackage{mathtools}
\usepackage{amsthm}
\usepackage{mhchem} 

\usepackage[capitalize,noabbrev]{cleveref}

\theoremstyle{plain}
\newtheorem{theorem}{Theorem}[section]
\newtheorem{proposition}[theorem]{Proposition}

\theoremstyle{definition}

\theoremstyle{remark}

\usepackage[textsize=tiny]{todonotes}

\icmltitlerunning{Latent Reasoning in TRMs is Secretly a Policy Improvement Operator}

\begin{document}

\twocolumn[
  \icmltitle{Latent Reasoning in TRMs is Secretly a Policy Improvement Operator}




  \begin{icmlauthorlist}
    \icmlauthor{Arip Asadulaev}{yyy}
    \icmlauthor{Rayan Banerjee}{yyy}
    \icmlauthor{Fakhri Karray}{yyy}
    \icmlauthor{Martin Takac}{yyy}
  \end{icmlauthorlist}

  \icmlaffiliation{yyy}{MBZUAI}

  \icmlcorrespondingauthor{Arip Asadulaev}{arip.asadulaev@mbzuai.ac.ae}

  \icmlkeywords{Reasoning, Latent, TRM, RL, ARC-AGI, Diffusion}

  \vskip 0.3in
]


\printAffiliationsAndNotice{}  

\begin{abstract}

Recently, small models with latent recursion have obtained promising results on complex reasoning tasks. These results are typically explained by the theory that such recursion increases a network’s depth, allowing it to compactly emulate the capacity of larger models. However, the performance of recursively added layers remains behind the capabilities of one‑pass models with the same feed-forward depth. This means that in the looped version, not every recursive step effectively contributes to depth. This raises the question: when and why does latent reasoning improve performance, and when does it result in \emph{dead compute?} In our work, we demonstrate that latent recursive reasoning provides answer to this question. We show that latent recursive reasoning can be formalized as a policy improvement algorithm. Building on these insights, we propose to use a training schemes from reinforcement learning and diffusion methods for latent reasoning models. Using the Tiny Recursive Model as our testbed, we show that with our modifications we can avoid dead compute steps and reduce the total number of forward passes by 18× while maintaining performance. Broadly speaking, we show how a policy improvement perspective on recursive steps can explain model behavior and provide insights for further improvements. 

\end{abstract}

\vspace{-5mm}
\section{Introduction}
\label{sec:intro}
The goal of reasoning models is to start from a set of specific examples or observations and to infer a general rule or pattern. Such models systematically manipulate and connect pieces of information to infer new conclusions and solve problems not explicitly stated in the training data. 


Recently, the use of latent reasoning and iterative refinement loops has become a major driver of progress in deep learning reasoning. Looped transformers \cite{giannou2023looped, yang2023looped} repeatedly apply the same transformer block with input injection at each step and achieve better performance than a standard forward pass transformer in reasoning and meta learning tasks, while using a 10x lower number of parameters \cite{yang2023looped}.



Recently, models like Hierarchical Reasoning Models (HRM) \cite{wang2025hierarchical} and Tiny Recursive Models \cite{jolicoeur2025less}, were built on the similar idea of reusing model output repeatedly in both the input and latent spaces. These models have demonstrated impressive performance and even outperformed multi-billion parameter LLM models on complicated \texttt{ARC-AGI} \cite{chollet2019measure}.


In our paper we ask a question when an why is a simulated effective layer really effective? And after answering this, we want to know a better path on how such a model can be improved. This latent reasoning does some algorithm using the steps in latent space, but what kind of algorithm is actually running in this latent space. In our paper, we are not seeking for some interoperability of the latent representation, our goal is to formally bridge the algorithm of latent space reasoning to the well-known frameworks. 

For this, we demonstrate how latent reasoning facilitates the policy improvement algorithm that aims to improve an \emph{advantage margin} between the one-step output and the recurrence one \cite{frans2025diffusion}. based on our findings, we propose an improved algorithm that makes the reasoning process more task-oriented. 

TRMs are difficult to train efficiently: supervision is primarily applied to the final output, leaving early refinement steps with weak credit assignment and making optimization brittle in the small-data regime.
We propose \emph{Deep Improvement Supervision} (DIS), a training method that supplies step-wise intermediate targets during recursion. In our main implementation, these targets are generated by a monotonic discrete corruption schedule of the ground-truth output, turning each recursion step into a supervised sub-goal. 


Our method achieves competitive performance on complex reasoning benchmarks, including \texttt{ARC-AGI 1, ARC-AGI 2}, using a much simpler architecture than TRM. We avoid training the halting step module, use 3x fewer supervision steps and 8x fewer latent reasoning steps. As a highlight of our method, our model achieves a 24$\%$ accuracy on \texttt{ARC-AGI-1} with 0.8 million parameters, outperform most of the existing open source LLM models without any external knowledge. 

\section{Background}
\subsection{Hierarchical Reasoning Models}
\label{sec:hrm}
A looped~\cite{giannou2023looped} and universal~\cite{dehghani2018universal} transformer repeatedly applies the same transformer block, with input injection each time, and is trained to make its intermediate loop output correct. This model was applied for various tasks, showing an improvement over the single step models~\cite{yang2023looped}. 

Based on this idea, Hierarchical Reasoning Models (HRMs) \cite{wang2025hierarchical} are supervised sequence-to-sequence models that perform \emph{recursive refinement} of a prediction by interleaving two small recurrent networks that operate at different update frequencies. Let $\tilde{\mathbf{x}}\in\mathcal{V}^{L}$ denote an input sequence of length $L$ in a vocabulary $\mathcal{V}$, and let $\mathbf{y}\in\mathcal{V}^{L}$ be the desired output. HRM uses an input embedding $f_I$, two recurrent reasoning modules, a \emph{low-level} module $f_L$ and a \emph{high-level} module $f_H$ and an output head $f_O$. 

After embedding $\mathbf{x}=f_I(\tilde{\mathbf{x}})\in\mathbb{R}^{L\times D}$, HRM carries two latent states $\mathbf{z}_L, \mathbf{z}_H\in\mathbb{R}^{L\times D}$ through the supervision steps. Within a forward pass, it performs $n$ updates of $f^{\phi}_L$ for every update of $f^{\psi}_H$ and repeats this $T$ time before decoding with $f_O$. A typical schedule used in previous work is $n=2$ and $T=2$. The final prediction is $\hat{\mathbf{y}}=\operatorname*{arg\,max} f_O(\mathbf{z}_H)$. During a forward pass, HRM evaluates the following updates:\vspace{-0.15em} 
\begin{align}
\label{eq:hrm}
\mathbf{z}^{t+1}_L &\leftarrow f^{\phi}_L((\mathbf{z}^{t}_L + \mathbf{x})+ \mathbf{z}^{t}_H),\quad \text{(repeated $t$ times)} \\\nonumber
\mathbf{z}^{t+1}_H &\leftarrow f^{\psi}_H(\mathbf{z}^{t+1}_L + \mathbf{z}^{t}_H), \\\nonumber
\hat{\mathbf{y}} &= \operatorname*{arg\,max} f_O(\mathbf{z}_H).
\end{align}
Most evaluations in the early part of the schedule are executed without gradient tracking, while the final evaluations are backpropagated through. This design aims to amortize compute while allowing the model to refine internal states before a gradient-bearing step. 

\textbf{Deep supervision.} To emulate very deep computation without prohibitive memory, HRM reuses $(\mathbf{z}_L,\mathbf{z}_H)$ across $N_{\text{sup}}$ supervision steps up to 16, detaching the states between steps. This \textit{deep supervision} improves the answer iteratively and yields hundreds of effective layers, while avoiding full backpropagation over time.  

\textbf{Adaptive Computational Time.} Training-time efficiency is improved by a learned halting mechanism (ACT). A small head predicts whether to stop iterating on the current example or continue; the published implementation trains this with a halting loss and an additional continue loss that requires an \emph{additional} forward pass, effectively doubling forward compute per optimization step \cite{graves2016act}. The test-time evaluation runs a fixed maximum number of supervision steps to maximize accuracy.


\subsection{Tiny Recursive Models}
\label{sec:trm}
Tiny Recursive Models \cite{jolicoeur2025less} retain the core idea of iterative refinement but collapse HRM’s complexity into a \emph{single} tiny network and a simpler recursion scheme. In the TRM setup, $\mathbf{z}_H$ is the state that the model reads out to produce the answer (the output head is applied to $\mathbf{z}_H$: $\hat{\mathbf{y}} = \arg\max f_O(\mathbf{z}_H)$). $\mathbf{z}_L$ is a \emph{working memory} state that is updated using the input $\mathbf{x}$ and the current answer, and is then used to update $\mathbf{z}_H$. Because the loss is applied on the prediction from $\mathbf{z}_H$, optimization pressure makes $\mathbf{z}_H$ look like (encode) the current solution. 

On the other hand, $\mathbf{z}_L$ is only indirectly supervised through its effect on $\mathbf{z}_H$, so it is free to be an internal reasoning representation rather than a decodable solution. Within same $f^{\phi}$ network for $L$ and $H$ module, TRM repeats each recursion equal to HRM Eq. \ref{eq:hrm}.

The prediction is made from $\mathbf{z}_H$ through the output head and trained with cross-entropy. This asymmetry (only $\mathbf{z}_H$ sees final $\mathbf{z}^{n+1}_L$; and only $\mathbf{z}_H$ is decoded and penalized) naturally pushes $\mathbf{z}_H$ towards the space of valid answers, while $\mathbf{z}_L$ becomes the latent \emph{reasoning scratchpad} that helps improve the next $\mathbf{z}_H$. 

The TRM paper explicitly reframes this: \emph{$\mathbf{z}_H$ is simply the current (embedded) solution... $\mathbf{z}_L$ is a latent feature that does not directly correspond to a solution but can be transformed into one by $f_H$}. It was shown on the Sudoku example that when you \texttt{reverse-embed + argmax}, the tokenized $\mathbf{z}_H$ looks like the solved grid, while tokenized $\mathbf{z}_L$ is not realted to the solution.

From the notation perspective, we want to note that TRM renames $\mathbf{z}_H$ to $\mathbf{y}$ (the running answer) and $\mathbf{z}_L$ to $\mathbf{z}$ (latent reasoning). The loop becomes: update $\mathbf{z}$ using $(\mathbf{x}, \mathbf{y}, \mathbf{z})$; then update $\mathbf{y}$ using $(\mathbf{y}, \mathbf{z})$. Carrying both across deep-supervision steps lets the model iterate: $\mathbf{z}$ remembers how it got to the current guess (like a chain-of-thought), and $\mathbf{y}$ stores the current guess itself. TRM trains a \emph{single} halting probability via binary cross-entropy against correctness. 

Let $\mathcal{L}_{\text{task}}=\mathrm{CE}(f_O(\mathbf{y}), \mathbf{y}_{\text{true}})$ be the prediction cross-entropy loss and $\mathcal{L}_{\text{halt}}=\mathrm{BCE}(q(\mathbf{y}), \hat{\mathbf{y}}=\mathbf{y}_{\text{true}})$ the halting loss with $q(\cdot)$ a scalar head. An optimization step iterates up to $N_{\text{sup}}$ supervision steps, performing $T-1$ no-grad recursion cycles, then one with gradients, detaching $(\mathbf{y},\mathbf{z})$ between supervision steps. Early stopping within a minibatch is permitted by using the halting signal.

\subsection{Policy Improvement Algorithm}
\label{sec:bg-cfgrl}
We consider a Markov decision process with state space $\mathcal{S}$, action space $\mathcal{A}$, and a stochastic policy $\pi(a \mid s)$. Given a reference policy $\hat{\pi}$, we define the discounted state-value and action-value functions as $V^{\hat{\pi}}(s)$ and $Q^{\hat{\pi}}(s,a)$, respectively. The advantage function is defined as
\[
A^{\hat{\pi}}(s,a) = Q^{\hat{\pi}}(s,a) - V^{\hat{\pi}}(s).
\]
A policy improvement operator maps a reference policy $\hat{\pi}$ to a new policy $\pi$ so that the expected return does not decrease, i.e. $J(\pi) \ge J(\hat{\pi})$. A sufficient condition for policy improvement is that the expected advantage under the discounted state--action occupancy measure induced by $\pi$ is non-negative.In practice, policies are optimized using samples drawn from the reference policy $\hat{\pi}$. This leads to the approximate objective
\begin{equation}
\tilde{J}(\pi)
= \mathbb{E}_{s \sim p_{\hat{\pi}}(s)}
\left[
\mathbb{E}_{a \sim \pi(a\mid s)}
\left[A^{\hat{\pi}}(s,a)\right]
\right],
\label{eq:approx_objective}
\end{equation}
where $p_{\pi}(s)$ denotes the \emph{discounted} visitation distribution under $\pi$ \cite{sutton1998reinforcement}. To control deviations from the policy, it is common to introduce a KL regularization:
\begin{align}
J(\pi) &= \mathbb{E}_{\tau \sim p(\tau \mid \pi)}
\left[ \sum_t \gamma^t r(s_t,a_t) \right] \nonumber \\
&- \beta \,
\mathbb{E}_{s \sim p_{\pi}(s)}
\left[ D_{\mathrm{KL}}\!\left(\pi(\cdot \mid s)\,\|\,\hat{\pi}(\cdot \mid s)\right) \right],
\label{eq:kl_objective}
\end{align}
where $\beta > 0$ controls the tradeoff between reward maximization and adherence to $\hat{\pi}$. The solution to the KL-regularized policy improvement problem in Equation~\eqref{eq:kl_objective} takes the form of a product policy:
\begin{equation}
\pi(a \mid s) \propto \hat{\pi}(a \mid s)\exp\!\left(\tfrac{1}{\beta} A^{\hat{\pi}}(s,a)\right),
\label{eq:exp_advantage}
\end{equation}
which corresponds to exponentiated advantage reweighting. This shows that \emph{policy improvement can be achieved by multiplicatively reweighting a reference policy using any monotonic function of advantage}, providing a unifying view of trust-region methods, advantage-weighted regression, and KL-regularized reinforcement learning objectives.

Recently, it was shown that the policy improvement framework does not necessarily rely on explicitly learning a value function.
Recent work formalizes a tight connection between \emph{classifier-free guidance} (CFG) in diffusion/flow models and \emph{policy improvement} in reinforcement learning \cite{frans2025diffusion}. This work provides a framework for the analysis of diffusion models as RL methods. For a state $\mathbf{s}$ and an action/output $\mathbf{a}$, the classifier-free guidance RL (CFGRL) method parameterizes a target policy as:
\begin{equation}
\pi(a\mid \mathbf{s})\;\propto\;\hat{\pi}(a\mid \mathbf{s})\;f\!\big(A_{\hat{\pi}}(\mathbf{s},\mathbf{a})\big),
\label{eq:cfgrl}
\end{equation}
where $f:\mathbb{R}\!\to\!\mathbb{R}_{\ge 0}$ is a nonnegative monotonically increasing function of the advantage $A^\pi(\mathbf{s},\mathbf{a}) = Q^\pi(\mathbf{s},\mathbf{a})-V^\pi(\mathbf{s})$~\cite{sutton1998reinforcement}, and $\hat{\pi}$ is a reference policy.

\section{Formalization of Latent Reasoning as Policy Improvement}
\label{sec:latent-faces}

Both TRMs and HRMs are often motivated by \emph{effective depth}: reusing a small parameter-shared network is said to emulate very deep computation with few parameters.
In HRMs this story is tied to Deep Equilibrium (DEQ) models, where one assumes convergence to a fixed point and invokes the Implicit Function Theorem to justify differentiating through only the last iterate~\cite{bai2019deep}.
However, follow-up analyses indicate that the fixed-point assumption is typically violated in practice: latent residuals remain non-negligible under truncation, and a one-step DEQ approximation does not fully explain the empirical gains~\cite{jolicoeur2025less}.

This paper adopts a different organizing principle.
Rather than asking whether recursion converges, we ask:
\emph{what does one latent-reasoning step do to the model's output distribution?}
Our central claim is that a single TRM step implements a \emph{policy improvement} update:
it takes a \emph{reference} policy (the model's current guess) and transforms it into an \emph{improved} policy using an internally computed signal.

\textbf{Notation.}
At recursion step $t$ we write the TRM state as
\[
s_t \;=\; (x, \mathbf{z}_L^t, \mathbf{z}_H^t),
\]
where $\mathbf{x}$ is the embedded input, $\mathbf{z}_L^t$ is a latent \emph{scratchpad}, and $\mathbf{z}_H^t$ is the embedding of the current solution.
A parameter-shared backbone is used in two roles:
\begin{equation}
\mathbf{z}_L^{t+1} = f_\phi^{(L)}(\mathbf{x}, \mathbf{z}_H^t, \mathbf{z}_L^t),
\qquad
\mathbf{z}_H^{t+1} = f_\phi^{(H)}(\mathbf{z}_H^t, \mathbf{z}_L^{t+1}),
\label{eq:trm-latent-updates-rewrite}
\end{equation}
and an output head $f_O$ maps $\mathbf{z}_H$ to logits over the next token/action set $\mathcal{A}$.
Only $\mathbf{z}_H$ is directly decoded and trained with cross-entropy.

\subsection{Two policies per recursion answer}
\label{subsec:two-policies}

Each recursion step produces two output distributions at essentially no extra cost: one \emph{before} the latent update and one
\emph{after} it.

\textbf{Reference policy (no-reasoning).}
Forward decoding:
\begin{equation}
\ell_u^t \;:=\; f_O(\mathbf{z}_H^t),
\qquad
\hat{\pi}_t(a \mid s_t) \;:=\; \mathrm{softmax}(\ell_u^t).
\label{eq:ref-policy}
\end{equation}

\textbf{Improved policy (post-reasoning).}
Decode after one latent-reasoning update:
\begin{equation}
\ell_c^t \;:=\; f_O(\mathbf{z}_H^{t+1}),
\qquad
\pi_t^+(a \mid s_t) \;:=\; \mathrm{softmax}(\ell_c^t).
\label{eq:imp-policy}
\end{equation}

Intuitively, $\hat{\pi}_t$ is the model's \emph{first guess} at step $t$, while $\pi_t^+$ is the refined distribution after one
reasoning step. We now introduce the same object used in policy-improvement analyzes: an \emph{optimality} (or \emph{improvement}) variable.
Let $o \in \{0,1\}$ indicate whether an action $a$ is \emph{good}\footnote{For supervised next-token prediction, the simplest choice is $o=1$ iff $a$ equals the ground-truth token; more general choices can encode task-level reward or self-consistency.} for state $s$. A standard KL-regularized improvement step produces a new policy by reweighting a reference policy $\hat{\pi}$ using an
\emph{optimality likelihood} $p(o=1\mid s,a)$:
\begin{equation}
\pi_{w}(a \mid s)
\;\propto\;
\hat{\pi}(a \mid s)\, p(o=1 \mid s,a)^{\,w},
\label{eq:generic-improvement}
\end{equation}
where $w\ge 0$ controls the step size: larger $w$ places more weight on actions deemed optimal.
Equation~\eqref{eq:generic-improvement} is exactly the multiplicative form of an exponentiated-advantage update; explicitly,
if we define the \emph{improvement score}
\begin{equation}
A(s,a) \;:=\; \log p(o=1\mid s,a),
\label{eq:adv-as-loglik}
\end{equation}
then \eqref{eq:generic-improvement} is equivalent to
\begin{equation}
\pi_{w}(a \mid s)
\;=\;
\frac{\hat{\pi}(a \mid s)\,\exp\!\big(w\,A(s,a)\big)}
{\mathbb{E}_{a'\sim \hat{\pi}(\cdot\mid s)}\!\left[\exp\!\big(w\,A(s,a')\big)\right]}.
\label{eq:exp-adv-update}
\end{equation}

This view matches a general policy-improvement theorem for \emph{product policies}. If we form a new policy by multiplying a reference policy by a non-negative,
monotonically increasing function of advantage, the result is guaranteed to improve expected return. Let $f:\mathbb{R}\to\mathbb{R}_{\ge 0}$ be non-negative and monotonically increasing.
If $A^{\hat{\pi}}(s,a)$ denotes the advantage under $\hat{\pi}$, then the product policy
\begin{equation}
\pi(a\mid s)\;\propto\;\hat{\pi}(a\mid s)\,f\!\big(A^{\hat{\pi}}(s,a)\big)
\label{eq:product-policy}
\end{equation}
is an improvement over $\hat{\pi}$. Please, see \emph{Remark 1 and Theorem 1}~\cite{frans2025diffusion}. We can say that \eqref{eq:exp-adv-update} is a special case of \eqref{eq:product-policy} with $f(u)=\exp(wu)$. 

The only missing ingredient is: \emph{what is $p(o=1\mid s,a)$ for latent reasoning?} In our setting, we will recover an \emph{implicit} improvement factor $p(o=1\mid s,a)$ from the TRM's own post-reasoning policy, yielding the same product-policy form, but without explicitly learning a value function. 

\subsection{Improvement from a conditional policy}
\label{subsec:bayes-inversion}

Here is the key observation: TRM already gives us a natural candidate for the \emph{optimality-conditioned} policy.
Interpret the post-reasoning distribution as
\begin{equation}
\pi_t^+(a\mid s_t)
\;\approx\;
p(a\mid s_t, o=1),
\label{eq:conditional-policy}
\end{equation}
This relationship is evident in Eq.~\ref{eq:trm-latent-updates-rewrite}, where the updated policy $\pi_t^+$ depends on $\mathbf{z}_H^{t+1}$which  is updated via $\mathbf{z}_H^{t+1} = f_\phi^{(H)}(\mathbf{z}_H^t, \mathbf{z}_L^{t+1})$. Crucially, $\mathbf{z}_L^{t+1} = f_\phi^{(L)}(\mathbf{x}, \mathbf{z}_H^t, \mathbf{z}_L^t)$ is explicitly conditioned on the input embedding $\mathbf{x}$. In contrast, standard one-step feed-forward inference does not receive this additional conditioning on $\mathbf{x}$. We can say that \emph{the policy after reasoning steps behaves like a policy conditioned on the event that the answer is improved.} Under this interpretation, we can recover an optimality by Bayes' rule:
\begin{equation}
p(o=1 \mid s,a)
\;=\;
\frac{p(a \mid s, o=1)\,p(o=1 \mid s)}{p(a \mid s)}.\nonumber
\label{eq:bayes-rule}
\end{equation}
Substituting $p(a\mid s,o=1)\approx \pi_t^+(a\mid s_t)$ and $p(a\mid s)\approx \hat{\pi}_t(a\mid s_t)$ yields
\begin{equation}
p(o=1 \mid s_t,a)
\;\propto\;
\frac{\pi_t^+(a\mid s_t)}{\hat{\pi}_t(a\mid s_t)},
\label{eq:optimality-ratio}
\end{equation}
where the proportionality constant $p(o=1\mid s_t)$ does not depend on $a$. Plugging \eqref{eq:optimality-ratio} into the generic improvement step \eqref{eq:generic-improvement} gives
\begin{align}
\pi_{t,w}(a \mid s_t)
&\propto
\hat{\pi}_t(a\mid s_t)\left(\frac{\pi_t^+(a\mid s_t)}{\hat{\pi}_t(a\mid s_t)}\right)^{w} 
\label{eq:policy-improvement-step1}
\\
&\propto
\hat{\pi}_t(a\mid s_t)^{\,1-w}\,\pi_t^+(a\mid s_t)^{\,w}.
\label{eq:policy-improvement-step2}
\end{align}
Equation~\eqref{eq:policy-improvement-step2} is the policy-improvement analog of the \emph{Bayes inversion} step:
it rewrites a multiplicative optimality reweighting in terms of a \emph{pair of policies} (unconditional vs.\ optimality-conditioned).
Concretely, latent reasoning provides both factors $\hat{\pi}_t$ and $\pi_t^+$, and therefore implicitly defines the entire
one-step improvement family $\{\pi_{t,w}\}_{w\ge 0}$. From \eqref{eq:optimality-ratio}, the improvement score is the log-ratio
\begin{align}
A_t(s_t,a) &:= \log \pi_t^+(a\mid s_t) - \log \hat{\pi}_t(a\mid s_t) \\\nonumber
&\equiv \log p(o=1\mid s_t,a) \;+\; \mathrm{const}(s_t),
\label{eq:adv-logratio}
\end{align}
i.e., up to a state-dependent constant, the TRM step computes an \emph{advantage-like} signal:
it measures how much the reasoning step increases the relative probability of action $a$.

\subsection{A supervised improvement criterion}
\label{subsec:supervised-criterion}

For the prediction of the next-token, let $y^\star$ be the ground-truth token at the current position.
Define the cross-entropy under the improved policy family \eqref{eq:policy-improvement-step2}:
\begin{equation}
L_t(w)
\;:=\;
-\log \pi_{t,w}(y^\star \mid s_t).\nonumber
\label{eq:loss-w-rewrite}
\end{equation}
Writing $\pi_{t,w}(\cdot\mid s_t)=\mathrm{softmax}\!\big((1-w)\log \hat{\pi}_t + w\log \pi_t^+\big)$, a direct derivative gives
\begin{equation}
\frac{d}{dw}L_t(w)
\;=\;
\mathbb{E}_{a\sim \pi_{t,w}(\cdot\mid s_t)}\!\big[A_t(s_t,a)\big]\nonumber
\;-\;
A_t(s_t,y^\star),
\label{eq:dLdw-adv}
\end{equation}
and
\begin{equation}
\frac{d^2}{dw^2}L_t(w)
\;=\;
\mathrm{Var}_{a\sim \pi_{t,w}(\cdot\mid s_t)}\!\big[A_t(s_t,a)\big]\nonumber
\;\ge\;0.
\label{eq:d2Ldw2}
\end{equation}

Thus, increasing $w$ (trusting the reasoning step more strongly) decreases the loss if and only if the ground-truth action has
\emph{above-average improvement score}:
\begin{equation}
A_t(s_t,y^\star)
\;>\;
\mathbb{E}_{a\sim \pi_{t,w}(\cdot\mid s_t)}\!\big[A_t(s_t,a)\big].
\label{eq:improvement-margin}
\end{equation}
Equation~\eqref{eq:improvement-margin} is a precise, testable condition for when a latent-reasoning update is helpful:
the step improves prediction exactly when it preferentially increases the relative log-probability of the correct token compared to
typical alternatives under the current policy.

\subsection{Latent recursion as repeated policy improvement}
\label{subsec:interpretation}

Putting the pieces together, one TRM recursion step produces:
(i) a reference policy $\hat{\pi}_t$,
(ii) an approximate optimality-conditioned policy $\pi_t^+$,
and therefore (iii) an implicit improvement model $p(o=1\mid s_t,a)$ via Bayes inversion~\eqref{eq:optimality-ratio}.
The family $\pi_{t,w}$ in \eqref{eq:policy-improvement-step2} is exactly the corresponding policy-improvement update, with an advantage-like signal given by the log-ratio.

The recursion $t\mapsto t+1$ can therefore be read as \emph{iterating a learned policy improvement operator on a fixed input}:
each latent step proposes a new distribution over actions and simultaneously provides the internal signal that justifies reweighting
the previous distribution.
The remaining question is not whether recursion converges, but whether training shapes $A_t(s,a)$ to be consistently aligned with
task success---so that each step performs non-dead, directed improvement rather than redundant computation.

\section{Deep Improvement Supervision}
\label{sec:dis}

Section~\ref{sec:latent-faces} reframes a latent-reasoning step as a \emph{policy-improvement operator}: each recursion produces a \emph{reference} policy $\hat{\pi}_t(\cdot\mid s_t)$ and a \emph{post-reasoning} policy $\pi_t^+(\cdot\mid s_t)$, and their log-ratio
\(
A_t(s_t,a)=\log \pi_t^+(a\mid s_t)-\log \hat{\pi}_t(a\mid s_t)
\)
acts as an advantage-like improvement signal. The analysis yielded a concrete \emph{Advantage Margin} condition: a latent update helps exactly when the ground-truth action has an above-average improvement score under the interpolated policy family $\pi_{t,w}$ (Eq.~\eqref{eq:improvement-margin}).

This gives a clean training target: rather than merely hoping that recursion learns to improve, we should \emph{shape} the update so that, at every step, the log-ratio advantage prefers the next correct refinement over typical alternatives. Moreover, policies of the form
\(
\pi \propto \hat{\pi}\, f(A^{\hat{\pi}})
\)
with $f$ non-negative and monotone are improvement operators. In our setting, $\pi_t^+$ already plays the role of an \emph{optimality-conditioned} policy, and training should encourage $\pi_t^+$ to be the \emph{next} improved distribution relative to $\hat{\pi}_t$.

\textbf{Deep Improvement Supervision (DIS)} operationalizes this idea by providing explicit \emph{stepwise improvement targets} and supervising the \emph{post-reasoning} readout at every recursion step so that each latent update realizes a directed improvement. Let $\mathbf{x}$ be the input and $\mathbf{y}^\star$ the ground-truth output sequence. Let $\Phi$ be a target generator that produces a sequence $\{\mathbf{y}^\dagger_s\}_{s=0}^{N_{\text{sup}}}$ with $\mathbf{y}^\dagger_{N_{\text{sup}}}=\mathbf{y}^\star$ and $\mathbf{y}^\dagger_s$ strictly closer to $\mathbf{y}^\star$ than $\mathbf{y}^\dagger_{s-1}$ under a fixed discrepancy metric (e.g., expected Hamming distance).

We interpret the supervision step $s$ as enforcing a \emph{single policy-improvement move}: the pre-reasoning policy $\hat{\pi}_s$ should behave like a policy centered on $\mathbf{y}^\dagger_{s-1}$, while the post-reasoning policy $\pi_s^+$ should behave like the improved (optimality-conditioned) policy centered on $\mathbf{y}^\dagger_s$. The most literal objective is therefore a \emph{dual} cross-entropy:
\begin{equation}
\label{eq:dis_loss_dual}
\mathcal{L}_{\text{Dual}}
=
\sum_{s=1}^{N_{\text{sup}}}
\Big[
\underbrace{\mathrm{CE}(\ell_u^s,\mathbf{y}^\dagger_{s-1})}_{\text{Anchor reference}}
\;+\;
\underbrace{\mathrm{CE}(\ell_c^s,\mathbf{y}^\dagger_{s})}_{\text{Supervise improvement}}
\Big].
\end{equation}

In a recursive architecture, the input to step $s$ (which produces $\ell_u^s$) is a deterministic function of the post-reasoning state of the previous step (which produces $\ell_c^{s-1}$). Under teacher-forced recursion, training $\ell_c^{s-1}$ toward $\mathbf{y}^\dagger_{s-1}$ implicitly anchors $\ell_u^s$ near the previous target. Thus, we use the \emph{single-loss}
\begin{equation}
\label{eq:dis_loss_single}
\mathcal{L}_{\text{DIS}}
=
\sum_{s=1}^{N_{\text{sup}}}
\mathrm{CE}(\ell_c^s,\mathbf{y}^\dagger_{s}).
\end{equation}
Inductively, minimizing Eq.~\eqref{eq:dis_loss_single} enforces the same stepwise anchoring as Eq.~\eqref{eq:dis_loss_dual}: forcing $\ell_c^{s-1}\!\to\!\mathbf{y}^\dagger_{s-1}$ ensures $\ell_u^{s}$ starts near $\mathbf{y}^\dagger_{s-1}$.

\subsection{DIS as direct optimization of the Advantage Margin}

Training $\ell_c$ on $\mathbf{y}^\dagger_s$ while $\ell_u$ is anchored to $\mathbf{y}^\dagger_{s-1}$ forces the residual
\(
\Delta \ell^s=\ell_c^s-\ell_u^s
\)
encode the \emph{direction of incremental improvement} from $\mathbf{y}^\dagger_{s-1}$ to $\mathbf{y}^\dagger_s$. Equivalently, in probability space, DIS shapes the log-ratio advantage
\(
A_s(a)=\log \pi_s^+(a\mid s)-\log \hat{\pi}_s(a\mid s)
\)
assign higher-than-average advantage to the next target token(s), aligning with the theoretical Advantage Margin condition (Eq.~\eqref{eq:improvement-margin}). This converts the previous section's existence/diagnostic statement (``improvement occurs iff the margin is positive'') into a \emph{constructive training rule}: make the margin positive at every step by supervising the post-reasoning policy on a strictly improving target sequence.

\begin{proposition}[Stepwise Advantage-Margin Alignment]
\label{proposition:contraction}
Assume that the target generator $\Phi$ provides strictly improving targets with respect to a fixed scoring distribution $P$, i.e.,
\(
\log \tfrac{P(\mathbf{y}^\dagger_{s})}{P(\mathbf{y}^\dagger_{s-1})}>0
\)
for all $s$.
Then minimizing the sequential loss $\mathcal{L}_{\text{DIS}}$ drives the expected Advantage Margin to be positive:
\begin{equation*}
\mathbb{E}\Big[\Delta\ell[\mathbf{y}^\star]-\mathbb{E}_{a\sim \pi_w}\Delta\ell[a]\Big] \;>\; 0,
\end{equation*}
where the outer expectation is over the data (and model stochasticity) and the inner expectation is with respect to $\pi_w$ for any fixed $w\ge 0$.
\end{proposition}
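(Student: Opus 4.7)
My plan is to reduce Proposition~\ref{proposition:contraction} to the Advantage Margin Condition (Proposition~\ref{thm:margin}) by showing that at the fixed point of $\mathcal{L}_{\text{DIS}}$, the residual $\Delta\ell$ explicitly encodes a log-likelihood ratio pointing from $\mathbf{y}^\dagger_{s-1}$ toward $\mathbf{y}^\dagger_{s}$, and then using the scoring-distribution assumption to show that this ratio assigns a positive boost to the correct class in expectation. The argument proceeds in three stages: (i) characterize the effect of sequential DIS optimization on $\pi_u^s$ and $\pi_c^s$; (ii) translate the contraction assumption into a pointwise lower bound on $\mathbb{E}[\Delta\ell[y^\star]]$; (iii) upper bound the policy-weighted mean $\mathbb{E}_{a\sim\pi_w}[\Delta\ell[a]]$ via a cumulant-generating-function identity.

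For step (i), I would induct on the supervision index $s$. Assuming inductively that minimizing $\mathrm{CE}(\ell_c^{s-1},\mathbf{y}^\dagger_{s-1})$ has driven $\pi_c^{s-1}$ close to $\mathbf{y}^\dagger_{s-1}$, the argument reducing $\mathcal{L}_{\text{Dual}}$ to $\mathcal{L}_{\text{DIS}}$ in \S\ref{sec:dis} implies that the reference readout $\pi_u^s$ at step $s$ is implicitly anchored to $\mathbf{y}^\dagger_{s-1}$, while $\mathrm{CE}(\ell_c^s,\mathbf{y}^\dagger_s)$ simultaneously pushes $\pi_c^s$ toward $\mathbf{y}^\dagger_s$. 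Since logits and log-probabilities differ only by an $a$-independent constant that cancels in the difference $\Delta\ell[y^\star]-\mathbb{E}_{a\sim\pi_w}\Delta\ell[a]$, we may identify $\Delta\ell[a]$ with $\log\pi_c^s(a)-\log\pi_u^s(a)$ throughout the analysis.

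For step (ii), I would interpret $\log P(\mathbf{y}^\dagger_s)/P(\mathbf{y}^\dagger_{s-1})>0$ as a per-position contraction: at each token where $\mathbf{y}^\dagger_s$ differs from $\mathbf{y}^\dagger_{s-1}$, the edit moves the target closer to $\mathbf{y}^\star$. Taking expectation over the training distribution and over decoding positions then yields $\mathbb{E}[\Delta\ell[y^\star]]>0$ directly from the fixed-point identification. For step (iii), writing $\pi_w(a)\propto\pi_u^s(a)\exp(w\Delta\ell[a])$ identifies $\mathbb{E}_{a\sim\pi_w}[\Delta\ell[a]]$ with $\partial_w\log Z(w)$ of the moment-generating function $Z(w)=\mathbb{E}_{a\sim\pi_u^s}[e^{w\Delta\ell[a]}]$. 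Since $\log Z$ is convex with $\log Z(0)=0$, the Gibbs variational bound gives $\mathbb{E}_{a\sim\pi_w}[\Delta\ell[a]]\le \max_a \Delta\ell[a]$, and by construction this supremum is approached at $\mathbf{y}^\dagger_s$. Combining (ii) and (iii) delivers the strict margin.

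The main obstacle will be controlling the ``for any fixed $w\ge 0$'' clause, since $\mathbb{E}_{a\sim\pi_w}[\Delta\ell[a]]$ increases monotonically in $w$ toward $\max_a\Delta\ell[a]$. A clean strict inequality $\Delta\ell[y^\star]>\mathbb{E}_{a\sim\pi_w}[\Delta\ell[a]]$ therefore requires identifying $y^\star$ with the step-$s$ target $\mathbf{y}^\dagger_s$ (the intended reading) and a quantitative separation margin between $\mathbf{y}^\dagger_s$ and competing tokens, naturally provided by the strict contraction strength of $\Phi$. Absent such a margin, the statement should be read as a population-level claim: in expectation over data and positions where $\Phi$ is non-trivially contractive, the margin is strictly positive, which by Proposition~\ref{thm:margin} suffices to guarantee that increasing $w$ strictly decreases the task loss on average. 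Making the ``for all $w$'' clause uniformly rigorous would require either bounding $w$ by a function of the contraction rate or promoting expected-margin positivity to a high-probability statement—refinements that the current formulation leaves informal.
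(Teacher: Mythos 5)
Your proposal follows essentially the same route as the paper's proof in Appendix~\ref{app:proofs}: identify the logits at the DIS optimum with log-probabilities anchored to the consecutive targets \(\mathbf{y}^\dagger_{s-1},\mathbf{y}^\dagger_{s}\), read the residual \(\Delta\ell\) as a log-likelihood ratio, obtain \(\Delta\ell[y^\star]>0\) from the strictly-improving-targets premise, and then argue that the policy-weighted average boost is smaller. The only substantive difference is your step (iii), which replaces the paper's informal ``probability mass shifts toward \(y^\star\)'' argument with a cumulant-generating-function/Gibbs-bound estimate; the caveats you flag (strictness of the inequality, the uniform ``for any fixed \(w\ge 0\)'' clause, and the identification of \(y^\star\) with \(\mathbf{y}^\dagger_s\) at intermediate steps) are exactly the points that the paper's own proof also leaves heuristic.
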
 See Appendix~\ref{app:proofs} for more details. In short: standard training relies on the hope that recursion \emph{happens} to learn a monotone $f(A)$-style reweighting. DIS enforces it by construction, by making each post-reasoning policy $\pi_s^+$ predict an explicitly improved target relative to the implicitly anchored reference policy $\hat{\pi}_s$. The result is a verifiable iterative refinement procedure in which each forward step is trained to behave like a concrete policy-improvement move.

\subsection{Algorithm}
\label{sec:dis:algorithm}

Standard TRM training backpropagates a single terminal supervision signal through the entire recursive chain, so the model must simultaneously (i) discover useful intermediate states and (ii) align each latent update with improvement. DIS isolates the second requirement by \emph{supervising the improvement operator itself}: each recursion step is trained to map the previous target $\mathbf{y}^\dagger_{s-1}$ to the next target $\mathbf{y}^\dagger_s$, directly instantiating the stepwise improvement from \S~\ref{sec:latent-faces}.

Let $\{\mathbf{y}_s^\dagger\}_{s=1}^{N_{\text{sup}}}$ be constructed so that the expected discrepancy to $\mathbf{y}^\star$ strictly decreases with $s$.
Let $\mathbf{x}\!\in\!\mathcal{X}$ be the input, $\mathbf{y}^\star\!\in\!\mathcal{Y}$ the final target, and $(\mathbf{y},\mathbf{z})$ the TRM state passed across supervision steps. A \emph{target generator} $\Phi$ produces a sequence of stepwise targets
\begin{equation}
\mathbf{y}^{\dagger}_1,\;\mathbf{y}^{\dagger}_2,\;\ldots,\;\mathbf{y}^{\dagger}_{N_{\text{sup}}}
\;=\;
\Phi(\mathbf{x},\mathbf{y}^\star;\;1{:}N_{\text{sup}}),
\label{eq:phi}
\end{equation}
with $\mathbf{y}^{\dagger}_{N_{\text{sup}}}=\mathbf{y}^\star$, such that the distance to the final solution decreases monotonically along the schedule under a task-appropriate discrepancy $d$ (e.g., Hamming distance over tokens). Each supervision step $s\in\{1,\ldots,N_{\text{sup}}\}$ receives a \emph{puzzle embedding} $E(p)$.

Our framework admits diverse sources of intermediate targets $\Phi$ (any mechanism that yields monotone improvement targets is valid, since DIS only requires stepwise improvement, not a particular generative process):
\begin{enumerate}
\item \textbf{Programmable edits.} A deterministic generator creates a path from input to output, e.g. puzzle solvers that reveal a constant move in constraints per step, which yields $\mathbf{y}^{\dagger}_{s+1}=\mathrm{Edit}(\mathbf{y}^{\dagger}_s)$.

\item \textbf{LLM-generated plans.} A teacher LLM proposes intermediate sketches; these are projected onto the task’s discrete output space to form $\{\mathbf{y}^{\dagger}_s\}$.

\item \textbf{Discrete corruption schedule.} Define a corruption process $q_{\beta}(\tilde{\mathbf{y}}\,|\,\mathbf{y}^\star)$ (e.g., token masking or random replacement with rate $\beta$). Choose a decreasing noise schedule and sample $\mathbf{y}^{\dagger}_s\sim q_{\beta_s}(\cdot\,|\,\mathbf{y}^\star)$ so that the targets become progressively less corrupted, yielding a simple monotone-improvement curriculum.\footnote{We do not position DIS as diffusion modeling: DIS does not require diffusion schedulers, score matching, or explicit density-ratio estimation~\cite{lou2023discrete}. A corruption schedule is merely a convenient way to generate monotone intermediate targets in discrete spaces.}
\end{enumerate}

In our experiments, we use the simplest instantiation: stepwise targets via discrete corruption.
Let $\mathbf{x}$ be the input and $\mathbf{y}^\star$ the ground-truth output.
Choose a decreasing noise schedule $0=\beta_{N_{\text{sup}}}<\cdots<\beta_2<\beta_1\le 1$ and a token-level corruption kernel $q_{\beta}$ (e.g., masking at rate $\beta$). Define intermediate targets
\begin{equation}
\mathbf{y}^{\dagger}_s \;\sim\; q_{\beta_s}(\cdot \mid \mathbf{y}^\star), \qquad s=1,\ldots,N_{\text{sup}},
\label{eq:dis-targets}
\end{equation}
so that $\mathbf{y}^{\dagger}_{N_{\text{sup}}}=\mathbf{y}^\star$ and, in expectation, the discrepancy with $\mathbf{y}^\star$ decreases with $s$. Please, see Appendix \ref{sec:illustrations} for details.

During training, the model runs the TRM recursion for the $N_{\text{sup}}$ supervision steps. At each step $s$, it computes the post-reasoning logits $\ell_c^s$ and applies $\mathrm{CE}(\ell_c^s,\mathbf{y}^\dagger_s)$.
By the theory in \S~\ref{sec:latent-faces}, this stepwise supervision trains each latent update to increase the relative log-probability of the next improved target -- i.e., to implement a non-trivial policy improvement operator rather than an unconstrained recurrent computation.

\section{Experiments}
\label{sec:experiments}
In this section, we provide a detailed explanation and results on the complex \texttt{N-Queens} and \texttt{ARC-AGI} problems. 

\textbf{Backbone.} Our DIS model reuses the \emph{tiny single-network} TRM backbone but eliminates TRM’s extra recursion and halting heads. We use a 2-layer Transformer block with RMSNorm, SwiGLU MLPs, and rotary position embeddings; weights are shared for both the latent-update and answer-update calls, exactly as in TRM’s attention variant (``TRM-Att''), to isolate the contribution of DIS from capacity and architecture differences~\cite{jolicoeur2025less}. The task specific hyperparameters as the hidden layers size and reasoning steps are presented below, per task protocol. For additional experiments, see Appendices \S \ref{sec:illustrations} and \S \ref{sec:sudoku_mlp}.



From algorithmic perspective, we incorporate the time step $t$ into the model input and observe that an integer-based time index $(0,1\dots,N_{sup})$ yields superior results compared to standard continuous diffusion time conditioning $t\in[0,1])$. The total simplification that we made in comparison to the original TRM is as follows:
\begin{itemize}
    \item \textbf{Recursion budget:} DIS: $T{=}1, n{=}2$ vs.\ TRM: $T{=}3, n{=}6$ The total step formula is $N_{sup}×[T×(n+1)]$, so TRM does $16 *[3*(6+1)] = 336$, while we have $6×(2+1)=\textbf{18}$
    \item \textbf{Halting/ACT:} Although TRM/HRM uses halting, where HRM’s ACT requires a second forward pass for the continue loss, DIS uses a fixed $N_{\text{sup}}{=}6$ without halting head and no extra forward pass.
\end{itemize}
In our experiments, we match the hidden size of the TRM $D{=}512$ and call it the \emph{medium model} in Table \ref{tab:performance}. When we use $D{=}256$ and the single decoder layer model, which results in 0.8 mil. parameters, we call it \emph{compact}.
\subsection{\texttt{N-Queens}}
\begin{figure}[h!]
    \centering
        \includegraphics[width=0.23\textwidth]{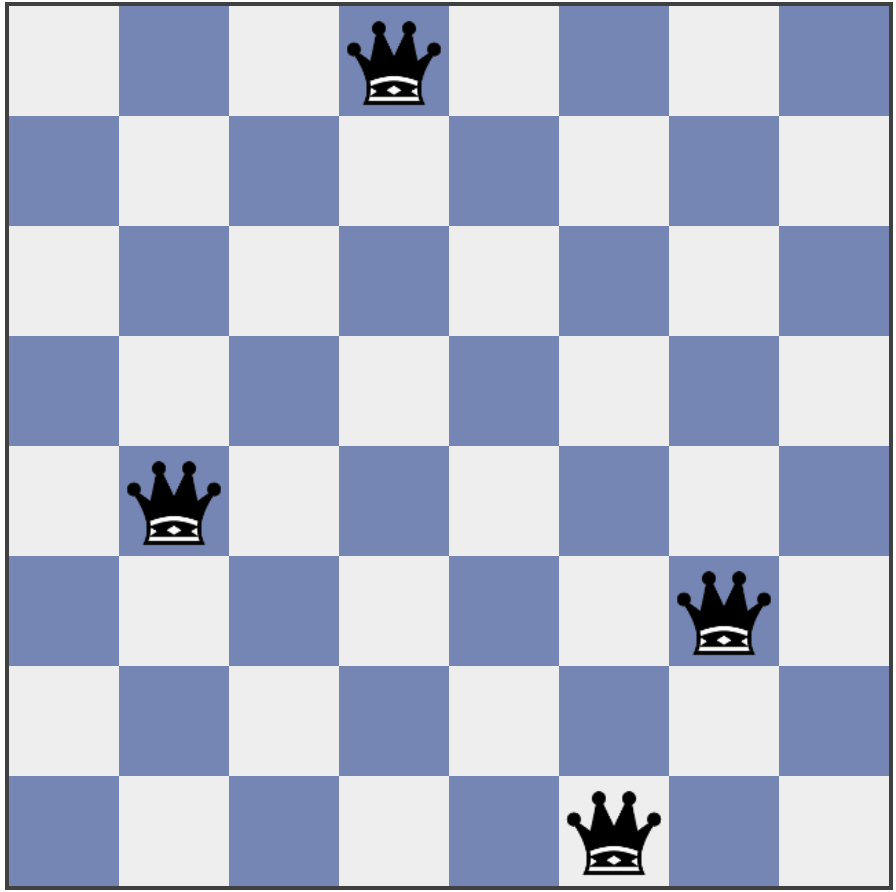}
        \hfill
        \includegraphics[width=0.23\textwidth]{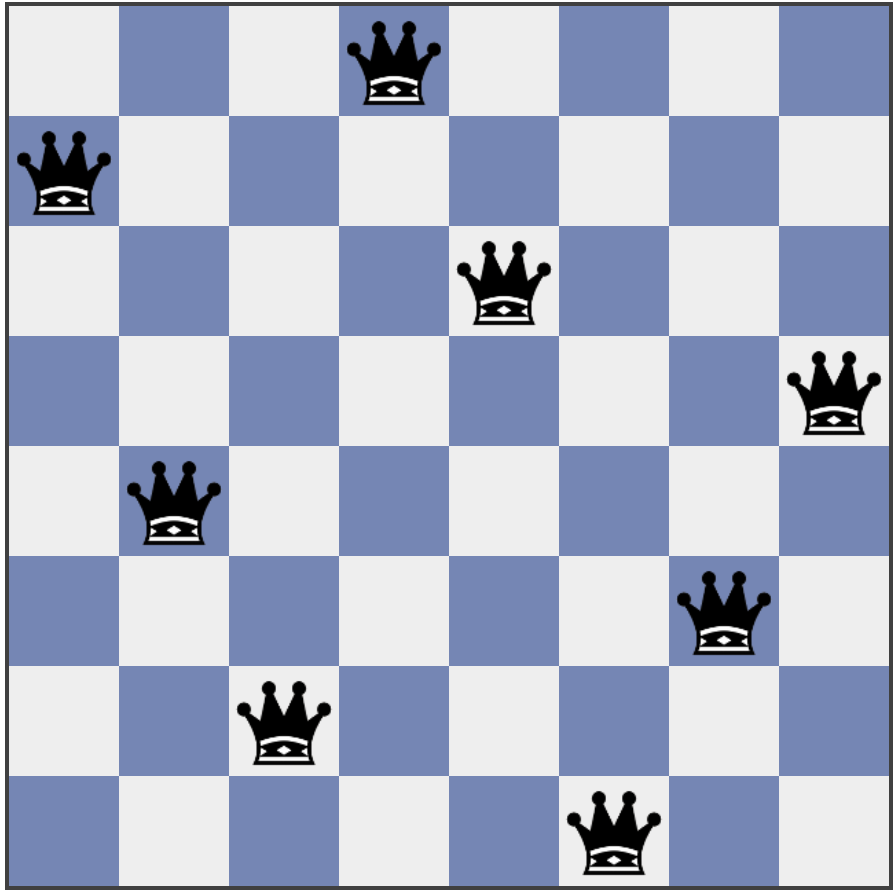}
    \caption{N-Queens reasoning problem example. Left is input and right is target solution.}
    \label{fig:n-queens}
\end{figure}
\textbf{Task Format}. The \texttt{N-Queens} problem is a combinatorial reasoning task that involves placing $Q$ queens on a $8 \times 8$ chessboard \cite{oarga2025generalizable}. The fundamental objective is to arrange the queens such that no two queens threaten each other, which imposes the strict constraint that no two queens can share the same row, column, or diagonal. 

This problem serves as a benchmark for evaluating a model's ability to generate valid configurations under complex constraints. The complexity of the task is directly determined by the parameter $Q$, which dictates the total number of queens that must be accommodated on the board. Complexity levels corresponding to problem instances ranging from $Q=1$ to $Q=7$ queens that are already on board, with lower values of $Q$ representing increasingly difficult reasoning challenges. In our experiments, we randomly sampled train and test experience with different complexity. 

Only the classic $8 \times 8$ board and no augmentation was used, an example of input and target is represented in \ref{fig:n-queens}. We generated 7,200 train and 200 test examples with a sequence length of 64 and a vocabulary size of 3.
\begin{figure}
    \centering
    \includegraphics[width=0.26\textwidth]{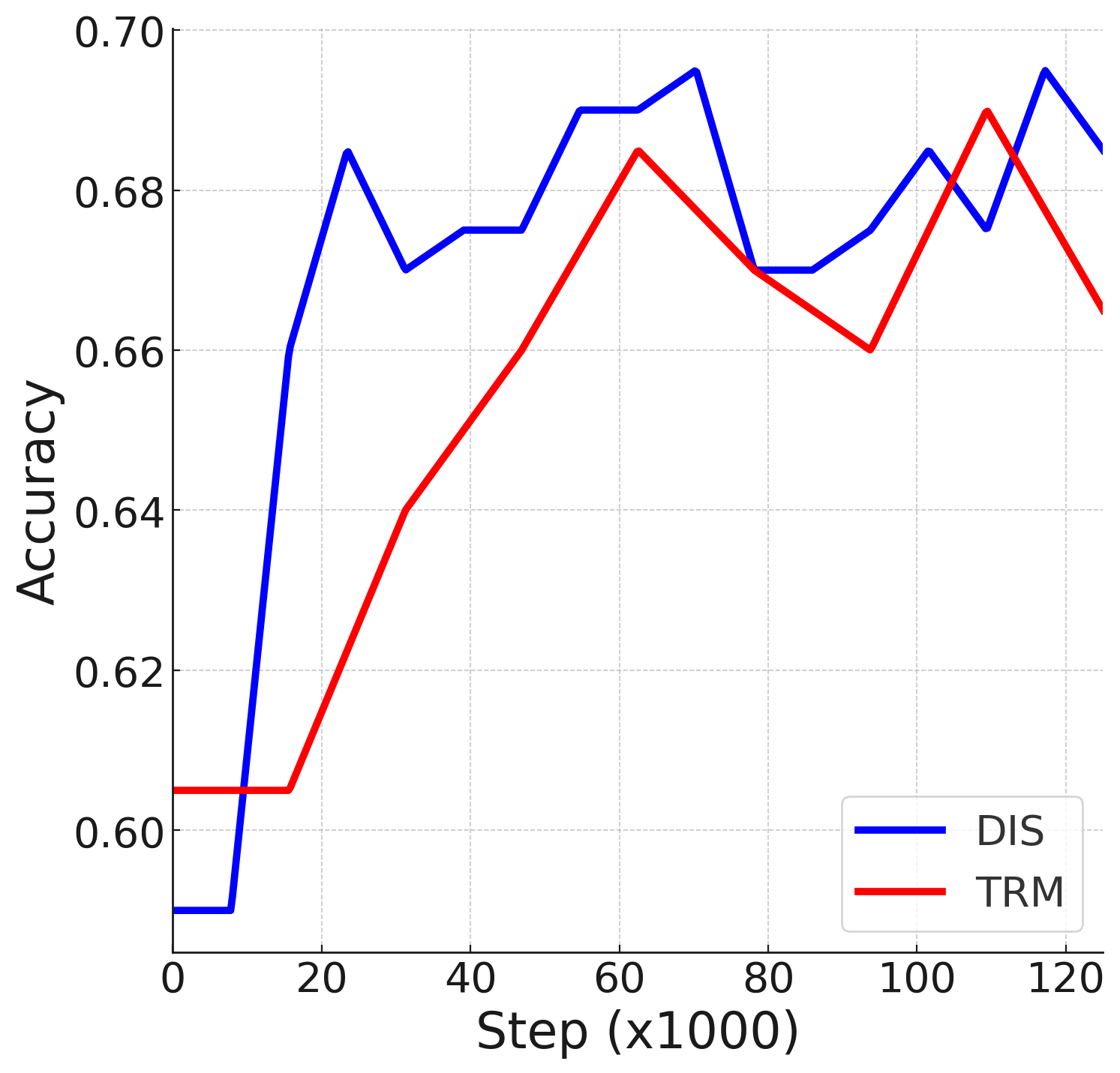}
    \caption{Accuracy curves on \texttt{N-Queens} problem.}
    \label{fig:n-queens_results}
    \vspace{-5mm}
\end{figure}

\textbf{Results}. This experiment demonstrates the impact of DIS training. As mentioned in \S \ref{sec:dis}, we used T = 1 and n = 2 for DIS, while the regular TRM of T = 3 and n = 6 was applied, with a halting head and 16 supervision steps. We achieved the same 0.69 accuracy while using much fewer inference steps. The architectures of the 0.8 mill parameters were used for both methods. 

\subsection{\texttt{ARC} Evaluation Protocol}
\textbf{Task format.} \texttt{ARC} puzzles are sets of colored grids with 2–3 input–output demonstrations and 1–2 test inputs per task; the maximum grid size is $30{\times}30$. Accuracy is scored over all test grids with two attempts permitted per task (standard ARC scoring). We evaluate on the public evaluation sets of \texttt{ARC-AGI-1} (800 tasks) and \texttt{ARC-AGI-2} (1,120 tasks), following TRM. 

\textbf{Data augmentation.} We adopt the augmentation pipeline of TRM: 1000 augmentations per puzzle via color permutations, dihedral-group transforms (90° rotations, flips/reflections) and translations. As in TRM, we also include the 160 ConceptARC tasks as additional training puzzles \cite{moskvichev2023conceptarc}. We attach a puzzle-specific embedding token per instance.

\textbf{Pre-/post-processing.} Inputs and outputs are tokenized as discrete color IDs; we concatenate demonstrations and the target input in the same sequence layout used by TRM, our positional scheme match theirs. For evaluation purposes, we apply the majority vote of the TRM to 1,000 augmented inferences per puzzle. Different number of passes are used and we report pass@2 as it is the most common setting. 

\subsection{\texttt{ARC} Model Settings}
\textbf{Objective.} Each supervision step $s\in\{1,\dots,6\}$ is trained toward a \emph{step-specific intermediate target} $\mathbf{y}^{\dagger}_s$ produced by a discrete corruption schedule of ground truth $\mathbf{y}^\star$ with monotonically decreasing noise. We use token-masking/replacement with a linearly decreasing mask rate over the 6 steps so that $\mathbb{E}[d(\mathbf{y}^{\dagger}_{s},\mathbf{y}^\star)]$ decreases with $s$. The loss is the standard token-level cross-entropy in $f_O(\mathbf{y})$ against $\mathbf{y}_s^{\dagger}$.

\textbf{Optimization.} We follow the training recipe of TRM wherever applicable: Adam-Atan with $\beta_1{=}0.9,\ \beta_2{=}0.95$, a 2k warm-up and the stable-max cross-entropy variant for stability. For ARC experiments, we use weight decay 0.1 and we did not find EMA important. Also, we match batch sizing; embedding LR warm-up and an elevated embedding LR (as in TRM) are retained. 

\textbf{Deep improvement supervision loop.} For each mini-batch we run $N_{\text{sup}}{=}6$ DIS steps. At each step, we execute a single external cycle (since $T{=}1$) comprising two internal latent/answer updates ($n{=}2$), backpropagating throughout the cycle; then we detach $(\mathbf{y},\mathbf{z})$ before the next step. We do not train a halting/ACT head. Importantly, when using a discrete diffusion model, the supervision trajectories are generated/sampled on the fly, as in a regular diffusion process. Therefore, for the same task, \emph{we can have various diffusion steps} towards the target. 

\textbf{Test-time compute.} We run the same $N_{\text{sup}}{=}6$ steps at evaluation. To compare fairly with previous \texttt{ARC} protocols, we keep TRM’s test-time augmentation vote: run the model over 1000 geometric/color augmentations of a puzzle and return the most common prediction.
\vspace{-3mm}
\subsection{\texttt{ARC} Results}
For our experiments, we replicated the TRM experiments and achieved slightly lower results than those reported in the original paper. We also re-implemented TRM with the same hyperparameter settings as in our \textit{medium} model to compare the methods with identical resources, we set $T=1$, $n=2$, but still use $N_{\text{sup}}{=}16$ for TRM, because the halting mechanism remained active. In addition, we implemented a smaller network to reproduce a compact model consisting of only 0.8 million parameters.  

\begin{table}[h!]
\centering
\small
\caption{Model Performance Comparison, pass@2}
\begin{tabular}{lccc}
\toprule
\textbf{Method} & \textbf{Params} & \textbf{ARC-1} & \textbf{ARC-2} \\
\midrule
\multicolumn{4}{l}{\textbf{Chain-of-thought, pretrained}} \\
\midrule
Deepseek R1 & 671B & 15.8 & 1.3 \\
Claude 3.7 16K & ? & 28.6 & 0.7 \\
o3-mini-high & ? & 34.5 & 3.0 \\
Gemini 2.5 Pro 32K & ? & 37.0 & 4.9 \\
Grok-4-thinking & 1.7T & 66.7 & 16.0 \\
Bespoke (Grok-4) & 1.7T & \textbf{79.6} & \textbf{29.4} \\
\midrule
\multicolumn{4}{l}{\textbf{Small-sample training}} \\
\midrule
TRM-compact  & 0.8M & 12.0 & 0.0 \\
DIS-compact (Ours) & 0.8M & 24.0 & 0.0 \\
\midrule
TRM-medium  & 7M & 27.1 & 0.0 \\
DIS-medium (Ours) & 7M & 40.0 & 2.5 \\
\midrule
TRM  & 7M & 40.4 & 3.3 \\
DIS & 7M & 41.3 & 6.0 \\
\bottomrule
\end{tabular}
\label{tab:performance}
\end{table}

\begin{figure}[h!]
    \centering
        \includegraphics[width=0.23\textwidth]{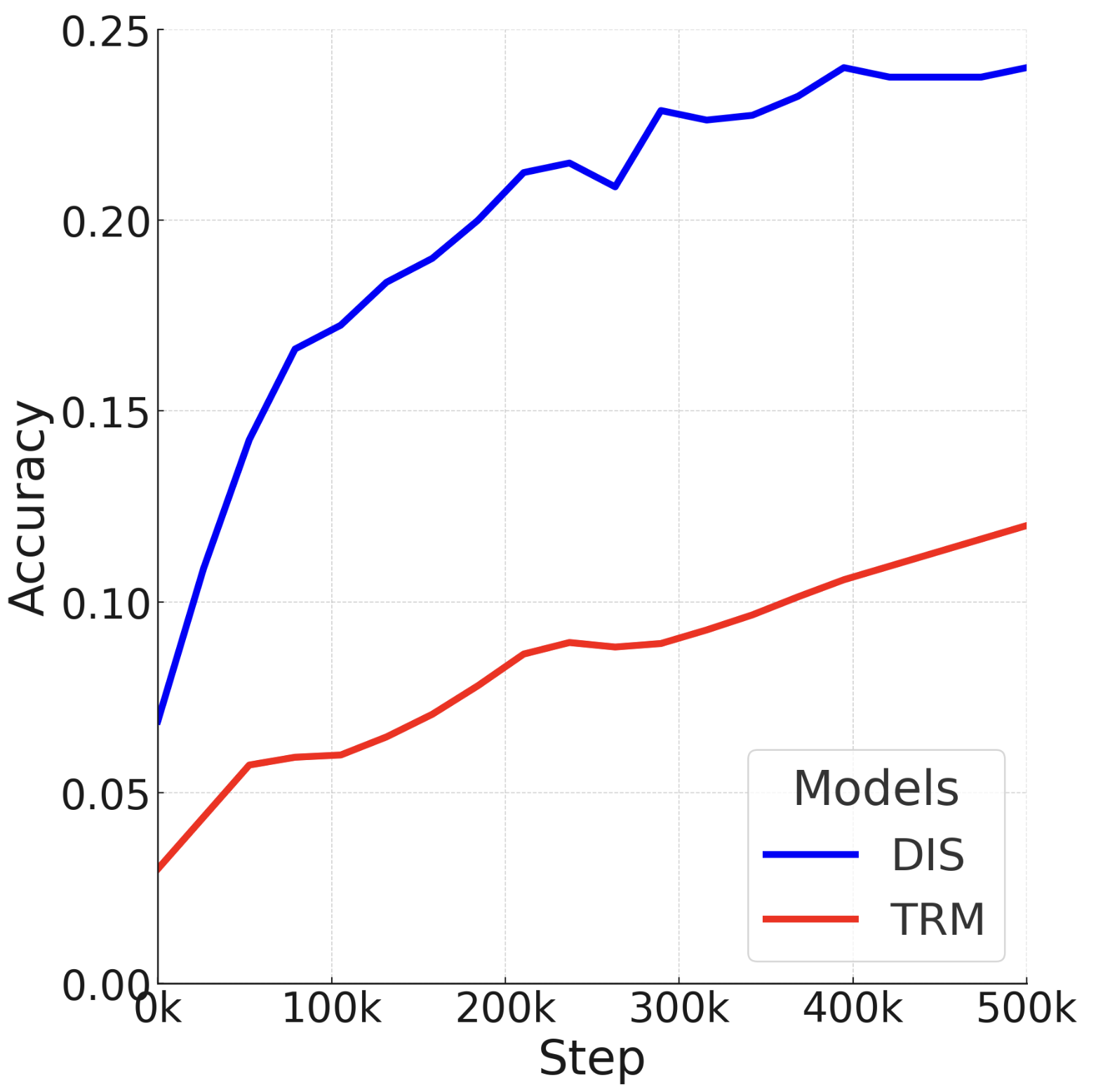}
        \hfill
        \includegraphics[width=0.23\textwidth]{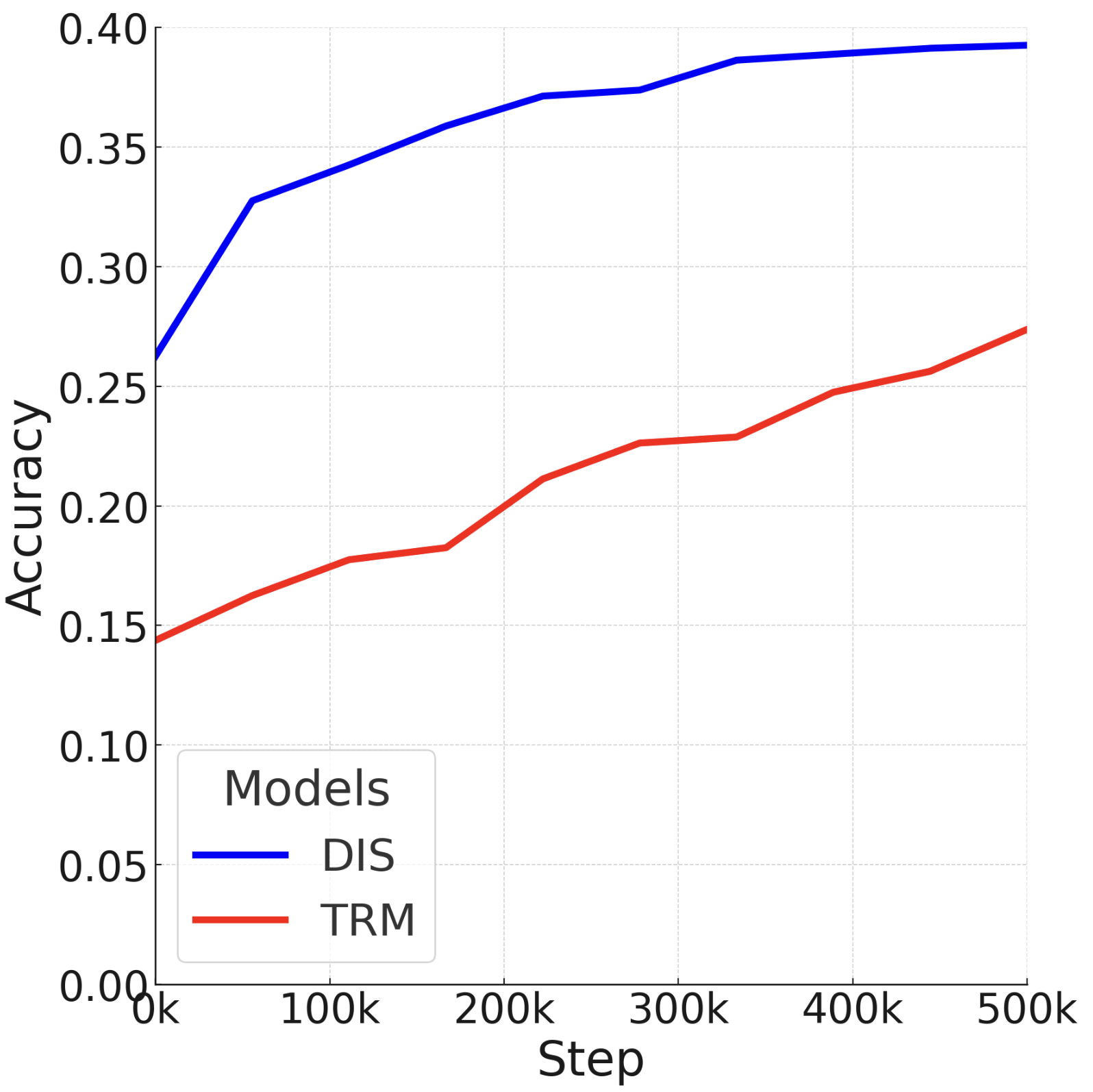}
    \caption{The DIS and TRM models pass@2 scores under the compact (left) and medium (right) setups.}
    \vspace{-5mm}
    \label{fig:comparison}
\end{figure}

The results are presented in Table \ref{tab:performance} and Figure \ref{fig:comparison}. As shown, for the compact model we dramatically outperform the original TRM. This shows that for TRM, latent reasoning steps are important. We reduced the total number of latent steps nine times and achieved a significant improvement in performance. However, explicit supervision of each step can overcome this drawback by simplifying the task for the TRM, meaning that longer latent reasoning is unnecessary.  Furthermore, our medium model outperforms the medium TRM and achieves results comparable to the original TRM.

\textbf{Shaped credit assignment across supervision steps.}
In baseline TRM, every step is trained directly against $\mathbf{y}^\star$, leaving it to the model to discover a self-improvement curriculum 
DIS supplies \emph{explicit} intermediate targets $\{\mathbf{y}^{\dagger}_s\}$, aligning the step-$s$ gradients with a concrete improvement. 

This reduces the burden on the latent state $\mathbf{z}$ to implicitly encode a stepwise plan and can accelerate optimization in scarce-data regimes, where TRM has been shown to be the most effective. DIS retains TRM’s minimal two-feature interface $(\mathbf{y},\mathbf{z})$, single tiny network reused for both updates, and the schedule of $T{-}1$ no-grad cycles followed by one grad cycle. It inherits the simplicity advantages of TRM while changing only the supervision signal.

\textbf{Compute and stability.}
With a monotone schedule, DIS turns each supervision step into a measurable sub-goal. We preserve the TRM’s compute profile per step (one gradient-bearing recursion cycle) and avoid HRM/TRM-style ACT. If targets are generated offline, the runtime overhead is negligible; if produced online (e.g., by a teacher model), they can be cached or amortized across epochs. For training we used the same 4 \texttt{GPU H100} setting as TRM, but learning takes $\approx 40$ hours against $72$ in TRM. 
\vspace{-4mm}
\section{Conclusion}
\vspace{-2mm}
\label{sec:dis:summary}
We demonstrate that small iterative reasoning models can achieve competitive performance on complex reasoning tasks such as the ARC. By reinterpreting TRMs through the lens of reinforcement learning, we reveal that TRMs implicitly perform policy improvement, where a latent \emph{working memory} state guides the model toward better solutions over recursive steps. The key contribution is Deep Improvement Supervision, builds on this insight by introducing a structured, stepwise training regime. DIS provides intermediate targets through a discrete diffusion process \cite{ho2022classifier, frans2025diffusion}, transforming the challenging problem of long-term credit assignment into a more tractable supervised learning task. Our approach not only simplifies training but also enhances efficiency, reducing the number of forward passes by 18x with high accuracy.
\vspace{-4mm}
\section*{Impact Statement}
\vspace{-2mm}
This paper presents work whose goal is to advance the field of Machine
Learning. There are many potential societal consequences of our work, none
which we feel must be specifically highlighted here.


\bibliography{example_paper}
\bibliographystyle{icml2026}

\newpage
\appendix
\onecolumn
\section{Illustrations}
\label{sec:illustrations}
\begin{figure*}[h!]
\centering

\begin{minipage}[h!]{0.47\textwidth}
    \vspace{0pt} 
    \begin{lstlisting}[basicstyle=\footnotesize\ttfamily, numbers=right, frame=none]
def latent_reasoning(x, y, z, n=2):
    with torch.no_grad():
        for j in range(T-1):
            for i in range(n):
                z = net(x, y, z)
            y = net(y, z)
    for i in range(n):
        z = net(x, y, z)
    y = net(y, z)
    return (y.detach(), z.detach()), output_head(y)

# Deep Improvement Supervision
for x_input, y_true in train_dataloader:
    y, z = y.init, z.init
    for step in range(N_supervision):
        y_step = f(x_true, y_true, step)
        x = input_embedding(x_input, step)
        (y, z), y_hat = latent_reasoning(x, y, z)
        loss = softmax_cross_entropy(y_hat, y_step)
        loss.backward()
        opt.step()
        opt.zero_grad()
    \end{lstlisting}
    \vspace{-2mm}
    \caption{Pseudocode for reasoning with deep improvement supervision. With $T=1$ (as in our \emph{medium} settings), \textbf{we avoid the large (\texttt{no-grad}) cycle} and significantly reduce computational time.}
    \label{fig:latent_reasoning_code}
\end{minipage}
\hfill
\begin{minipage}[h!]{0.48\textwidth}
    \vspace{0pt} 
    \centering
    \includegraphics[width=0.77\linewidth]{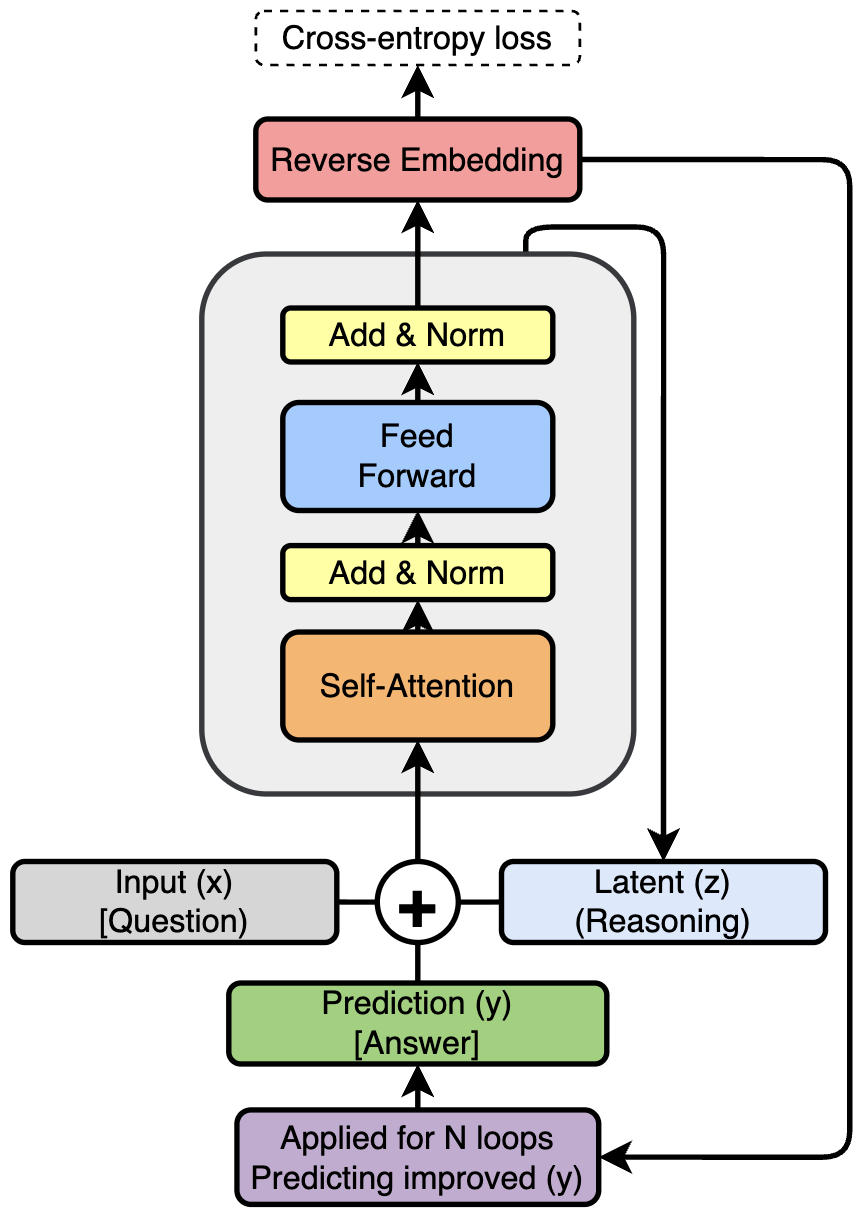}
    \caption{DIS model architecture. Algorithm starts with the embedded input question $\mathbf{x}$, initial embedded answer $\mathbf{y}$, and latent state $z$. For up to $n$ improvement steps, it tries to improve its answer $\mathbf{y}$ by simulating a discrete diffusion process, addressing any errors from its previous answer in an parameter-efficient manner.}
    \label{fig:architecture}
\end{minipage}
\vspace{-4mm}
\end{figure*}

\begin{figure}[h!]
    \centering
    \includegraphics[width=0.99\linewidth]{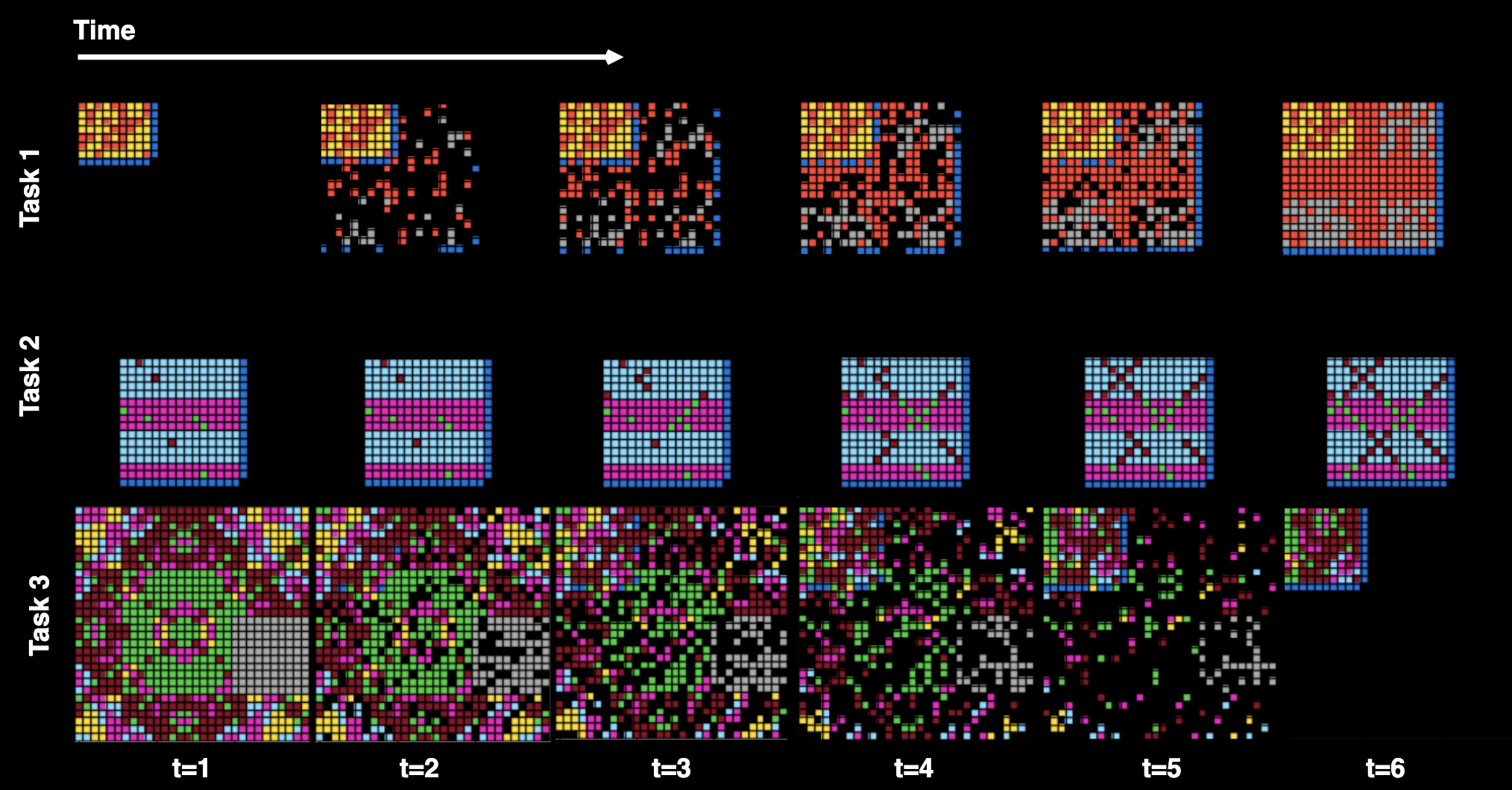}
    \caption{ARC-1 Solution examples.The corruption process is shown over six steps, from the initial input at time $t=0$ to the target at time $t=6$. A single training sample is illustrated per task.}
    \label{fig:placeholder}
\end{figure}
\section{Sudoku Experiments with an MLP Backbone}
\label{sec:sudoku_mlp}
\begin{wrapfigure}{r}{0.47\textwidth}
    \vspace{-0.4cm}
    \centering
    \includegraphics[width=0.46\textwidth]{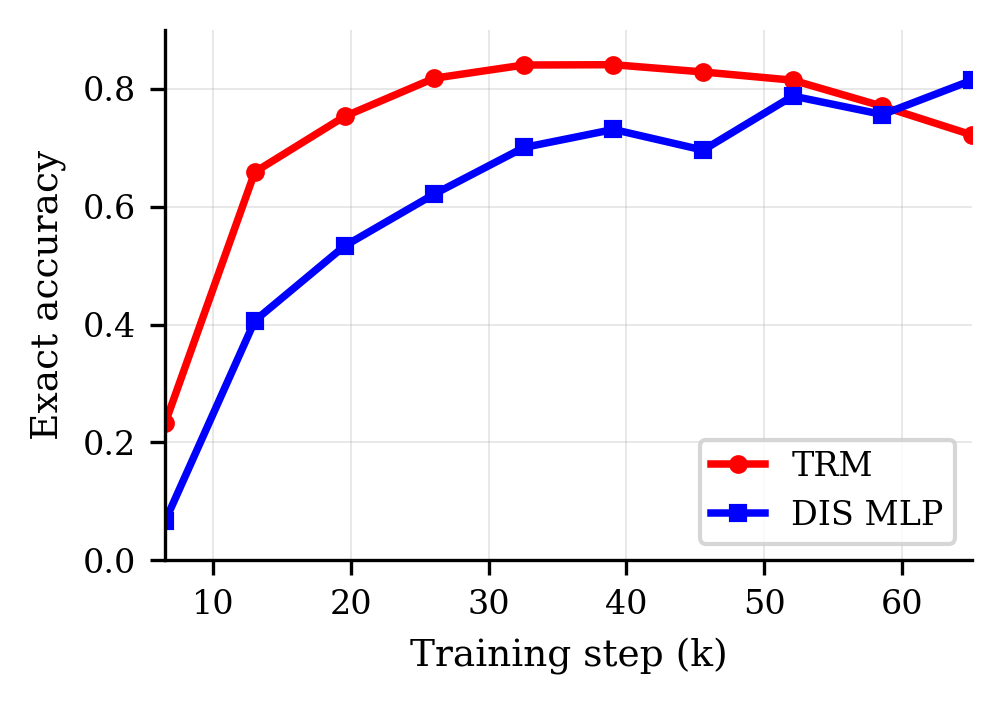}
    \vspace{-0.25cm}
    \caption{
    Sudoku exact accuracy for MLP backbones under the same compute budget.
    Our DIS model does not use a halting head and reaches a comparable final score,
    and learns more slowly than the TRM baseline. Understanding this slower
    optimization on Sudoku remains an important direction for future work.
    }
    \label{fig:sudoku_mlp}
    \vspace{-0.35cm}
\end{wrapfigure}
We evaluate Deep Improvement Supervision (DIS) on Sudoku using a lightweight MLP backbone. Each puzzle is represented as an $81$-token sequence over digits ${0,\ldots,9}$, where $0$ denotes an empty cell. The model keeps the same recurrent answer and latent states $(y,z)$ as TRM, but replaces the attention block with a residual MLP update network. This setting isolates the effect of the supervision strategy from the architectural benefits of attention.

For a fair comparison, our model uses the same compute budget as TRM, but does not use a halting head. Instead, we run a fixed number of supervision and reasoning steps and train each step with Deep Improvement Supervision. Intermediate targets ${y_s^\dagger}*{s=1}^{N*{\mathrm{sup}}}$ are generated from the solved grid $y^\star$ using a monotone discrete corruption schedule, with $y_{N_{\mathrm{sup}}}^\dagger = y^\star$. The training objective is
$
    \mathcal{L}_{\mathrm{DIS}}=
\sum_{s=1}^{N_{\mathrm{sup}}}
\mathrm{CE}!\left(f_O(y_s), y_s^\dagger\right).
$

Under this matched budget, the MLP-DIS model achieves the same final Sudoku score as TRM despite not relying on adaptive halting. However, we observe that learning is slower: the model requires more optimization steps to reach the same level of performance. This suggests that, unlike in ARC and N-Queens, the stepwise corruption targets used by DIS may not provide the most effective improvement trajectory for Sudoku. Understanding why DIS does not outperform TRM on this task, and whether Sudoku requires task-specific intermediate targets or adaptive computation, remains an important direction for future study. Please see figures above. 
\begin{figure}[t]
    \centering

    \begin{subfigure}{0.19\textwidth}
        \centering
        \includegraphics[width=\linewidth]{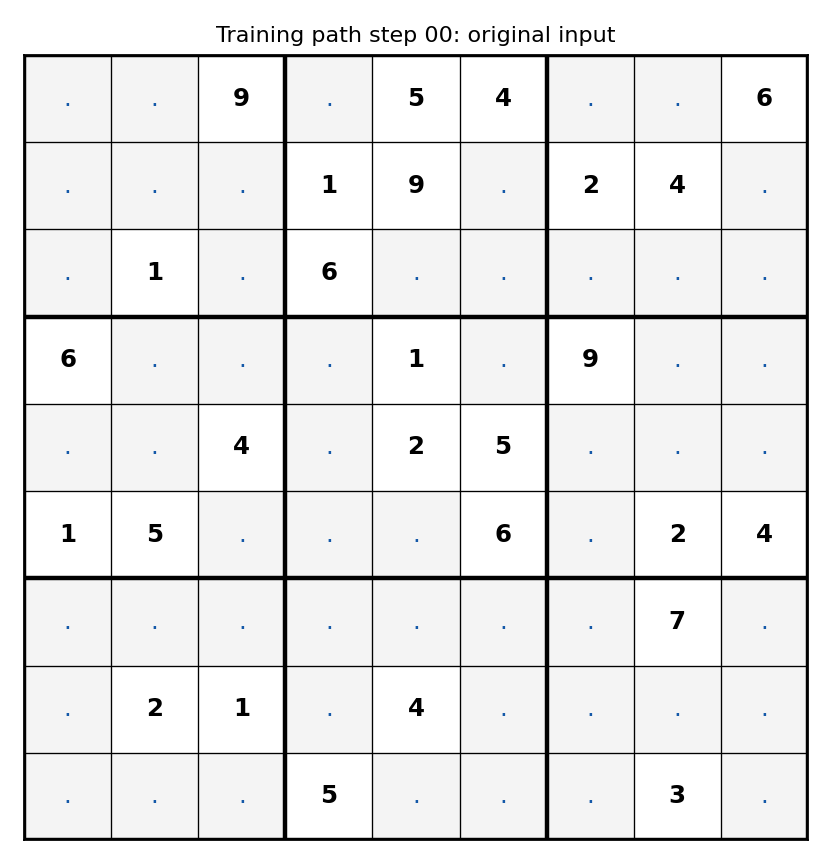}
        \caption{Input}
        \label{fig:row_1}
    \end{subfigure}
    \hfill
    \begin{subfigure}{0.19\textwidth}
        \centering
        \includegraphics[width=\linewidth]{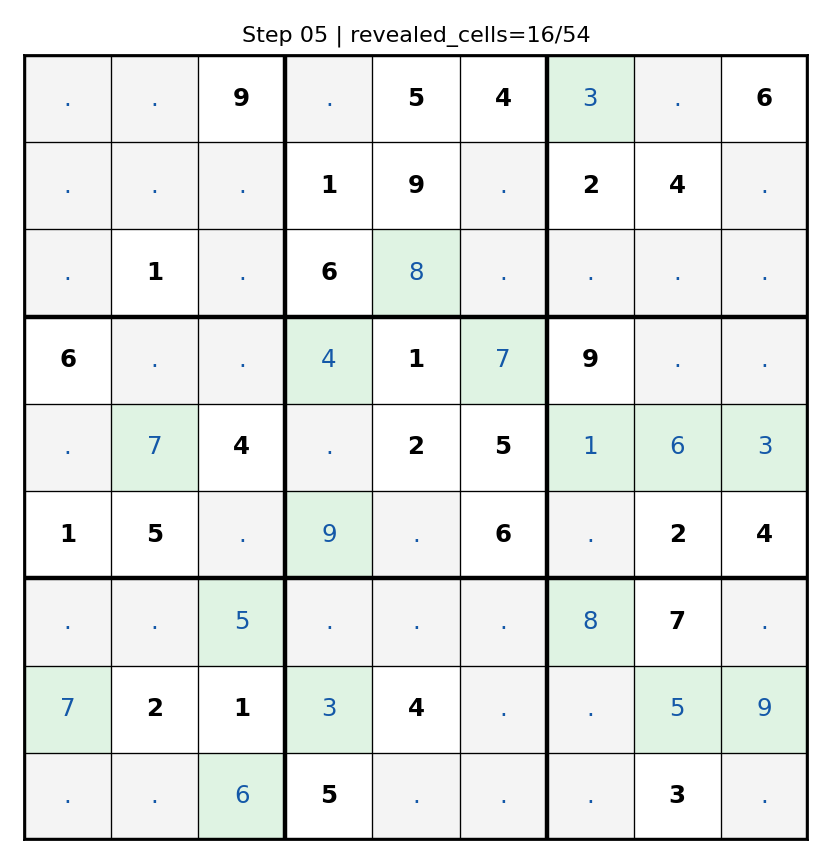}
        \caption{step 5}
        \label{fig:row_2}
    \end{subfigure}
    \hfill
    \begin{subfigure}{0.19\textwidth}
        \centering
        \includegraphics[width=\linewidth]{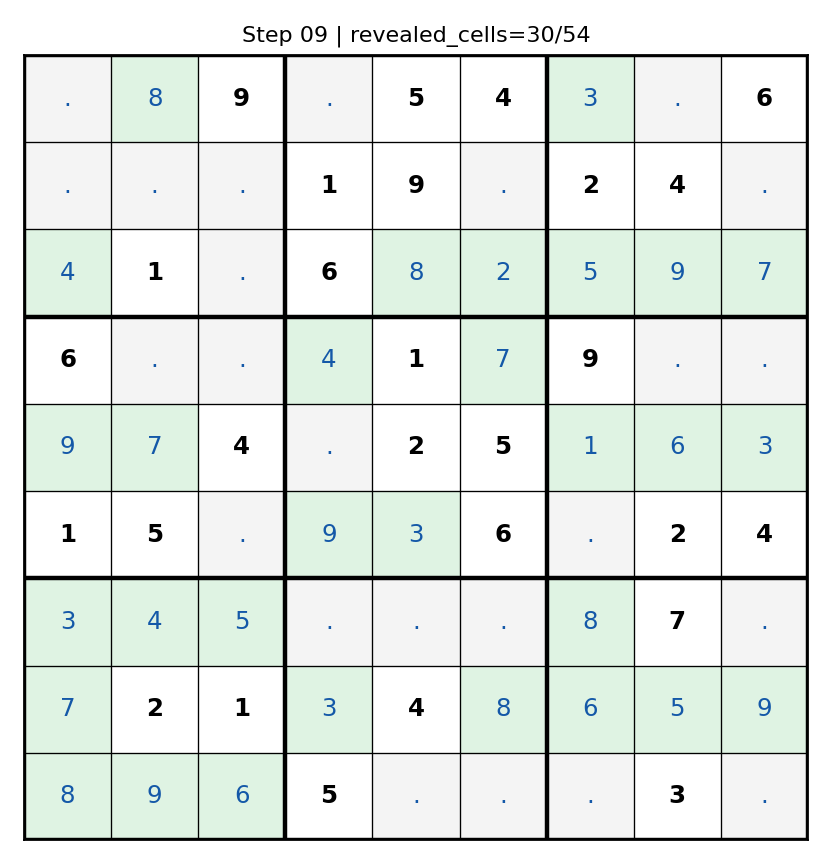}
        \caption{step 9}
        \label{fig:row_3}
    \end{subfigure}
    \hfill
    \begin{subfigure}{0.19\textwidth}
        \centering
        \includegraphics[width=\linewidth]{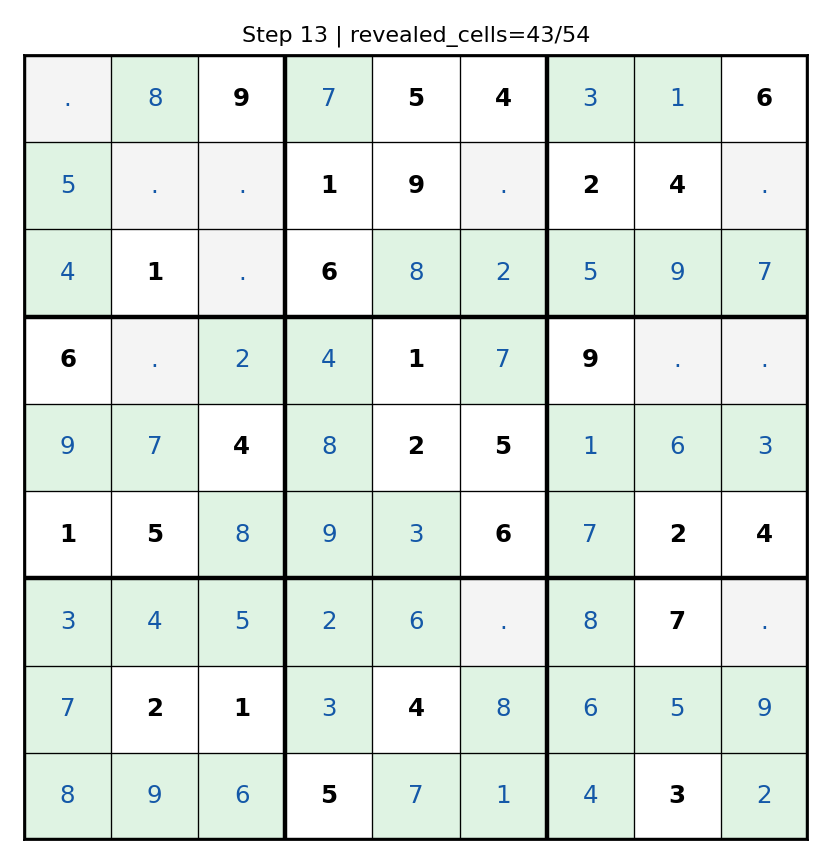}
        \caption{step 13}
        \label{fig:row_4}
    \end{subfigure}
    \hfill
    \begin{subfigure}{0.19\textwidth}
        \centering
        \includegraphics[width=\linewidth]{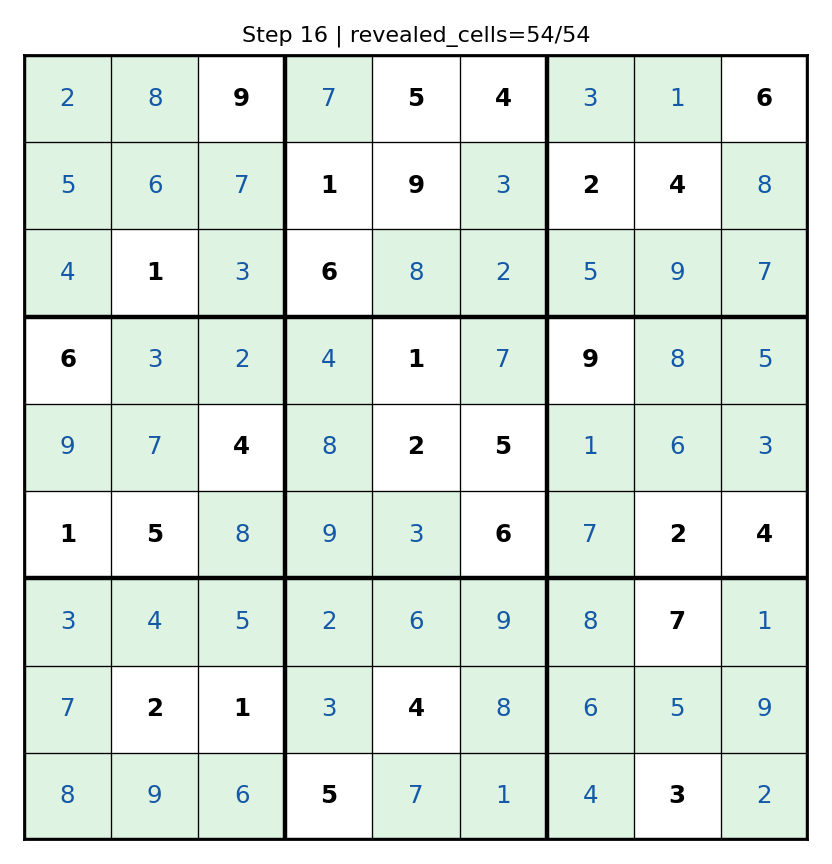}
        \caption{Output}
        \label{fig:row_5}
    \end{subfigure}

    \caption{Sudoku solution example.}
    \label{fig:five_images_row}
\end{figure}
\section{Proofs}
\label{app:proofs}

\begin{proof}[Proof of Proposition 4.1]
Fix a supervision step $s$ and its state $s_s$ (teacher-forced), with previous target $\mathbf{y}^\dagger_{s-1}$ and current target $\mathbf{y}^\dagger_s$.
Write the pre/post policies as
\[
\hat{\pi}_s(\cdot\mid s_s)=\operatorname{softmax}(\ell_u^s),\qquad
\pi_s^+(\cdot\mid s_s)=\operatorname{softmax}(\ell_c^s),
\]
and define the logit residual $\Delta \ell^s(a)=\ell_c^s[a]-\ell_u^s[a]$.

\paragraph{Step 1: Relate the margin in logits to the margin in advantages.}
Define the log-ratio advantage (Section~\ref{sec:latent-faces})
\[
A_s(a)\;:=\;\log \pi_s^+(a\mid s_s)-\log \hat{\pi}_s(a\mid s_s).
\]
Using $\log\operatorname{softmax}(\ell)[a]=\ell[a]-\log\sum_{a'}e^{\ell[a']}$, we have
\[
A_s(a) \;=\; \big(\ell_c^s[a]-\ell_u^s[a]\big)\;-\;\Big(\log \mathbf{z}_c^s-\log \mathbf{z}_u^s\Big)
\;=\;\Delta\ell^s(a)\;-\;C_s,
\]
where $\mathbf{z}_c^s=\sum_{a'}e^{\ell_c^s[a']}$, $\mathbf{z}_u^s=\sum_{a'}e^{\ell_u^s[a']}$, and $C_s$ does not depend on $a$.
Therefore, for any distribution $\mu$ over actions,
\[
\Delta\ell^s(\mathbf{y}^\dagger_s)-\mathbb{E}_{a\sim\mu}\Delta\ell^s(a)
\;=\;
A_s(\mathbf{y}^\dagger_s)-\mathbb{E}_{a\sim\mu}A_s(a).
\]
So it suffices to prove the claim for $A_s$.

\paragraph{Step 2: DIS makes the post-reasoning policy concentrate on the improved target.}
DIS minimizes $\mathrm{CE}(\ell_c^s,\mathbf{y}^\dagger_s)=-\log \pi_s^+(\mathbf{y}^\dagger_s\mid s_s)$ at every step, so along training we drive
\[
\pi_s^+(\mathbf{y}^\dagger_s\mid s_s)\;\to\;1.
\]
Equivalently, for any $\varepsilon>0$ there exists a training time after which
$\pi_s^+(\mathbf{y}^\dagger_s\mid s_s)\ge 1-\varepsilon$.

\paragraph{Step 3: The interpolated policy $\pi_{s,w}$ also concentrates on the improved target.}
Recall $\pi_{s,w}(a\mid s_s)\propto \hat{\pi}_s(a\mid s_s)^{1-w}\,\pi_s^+(a\mid s_s)^w$ for any fixed $w\ge 0$.
Since softmax policies have full support, $\hat{\pi}_s(\mathbf{y}^\dagger_s\mid s_s)>0$.
Hence the multiplicative form implies: if $\pi_s^+(\mathbf{y}^\dagger_s\mid s_s)\ge 1-\varepsilon$, then also
\[
\pi_{s,w}(\mathbf{y}^\dagger_s\mid s_s)\;\ge\;1-\varepsilon_w
\qquad\text{with }\varepsilon_w\to 0\text{ as }\varepsilon\to 0
\]
(for fixed $w$, because $\pi_{s,w}$ inherits concentration from $\pi_s^+$).

\paragraph{Step 4: Concentration implies a positive advantage margin.}
Decompose the expectation:
\[
\mathbb{E}_{a\sim\pi_{s,w}}A_s(a)
=
\pi_{s,w}(\mathbf{y}^\dagger_s)\,A_s(\mathbf{y}^\dagger_s)
+
\sum_{a\neq \mathbf{y}^\dagger_s}\pi_{s,w}(a)\,A_s(a).
\]
Thus
\[
A_s(\mathbf{y}^\dagger_s)-\mathbb{E}_{a\sim\pi_{s,w}}A_s(a)
=
\sum_{a\neq \mathbf{y}^\dagger_s}\pi_{s,w}(a)\Big(A_s(\mathbf{y}^\dagger_s)-A_s(a)\Big).
\]
Under DIS, $\pi_s^+(\mathbf{y}^\dagger_s)\to 1$ forces $\log\pi_s^+(\mathbf{y}^\dagger_s)-\log\pi_s^+(a)\to +\infty$ for $a\neq \mathbf{y}^\dagger_s$,
while $\hat{\pi}_s$ is fixed at that training step; hence $A_s(\mathbf{y}^\dagger_s)-A_s(a)>0$ for all $a\neq \mathbf{y}^\dagger_s$ once training is sufficiently advanced.
Since $\pi_{s,w}(a)\ge 0$ and $\sum_{a\neq \mathbf{y}^\dagger_s}\pi_{s,w}(a)=1-\pi_{s,w}(\mathbf{y}^\dagger_s)=\varepsilon_w$, we obtain
\[
A_s(\mathbf{y}^\dagger_s)-\mathbb{E}_{a\sim\pi_{s,w}}A_s(a)\;>\;0
\]
as soon as $\varepsilon_w$ is small enough and the strict gaps $A_s(\mathbf{y}^\dagger_s)-A_s(a)$ are positive.

\paragraph{Step 5: Apply at the final target and take expectation.}
At the last supervision step $s=N_{\text{sup}}$, $\mathbf{y}^\dagger_s=\mathbf{y}^\star$.
Combining Steps 1--4 and taking expectation over data and model stochasticity yields
\[
\mathbb{E}\Big[\Delta\ell[\mathbf{y}^\star]-\mathbb{E}_{a\sim \pi_w}\Delta\ell[a]\Big]
=
\mathbb{E}\Big[A(\mathbf{y}^\star)-\mathbb{E}_{a\sim \pi_w}A(a)\Big]
\;>\;0.
\]
Finally, the assumption $\log \tfrac{P(\mathbf{y}^\dagger_s)}{P(\mathbf{y}^\dagger_{s-1})}>0$ ensures the supervision sequence corresponds to genuine stepwise improvements under the task scoring $P$, i.e., the desired direction of advantage is consistent with the target generator rather than arbitrary.
\end{proof}

\section{Discussion}
\textbf{Potential algorithmic improvements.}
A key limitation of the current implementation is the use of a fixed number of supervision steps for every \texttt{ARC} task. However, task complexity varies significantly; some tasks may benefit from a higher number of denoising steps, while others require fewer. This observation aligns with findings from the original TRM paper, which highlighted the significant contribution of the halting mechanism to the final performance. Therefore, explicitly predicting the necessary number of denoising steps for each task could potentially improve overall model efficiency and accuracy. 

Another promising direction for technical improvement involves the adoption of a discrete latent space. This approach has been successfully utilized in deep learning architectures such as the Dreamer model~\cite{hafner2019dream} and VQ-VAE~\cite{razavi2019generating}, where latent spaces have proven to be robust and scalable for generative tasks.

\textbf{Alternative Improvement Generators.}
As described in \S \ref{sec:dis:algorithm}, there are several viable methods to generate intermediate steps. Although the prior discrete diffusion serves as the main source in this work, our framework is designed to support various step-by-step approaches.

We also investigated the use of LLM-generated trajectories between transition samples and their targets, specifically utilizing the Gemini 2.5 Pro model. We trained a compact network (0.8 million parameters) on these trajectories; however, this method underperformed compared to the diffusion prior. 

We hypothesize that LLM-generated trajectories fail to provide a monotonic improvement path, often introducing highly nonlinear "jumps" between intermediate steps that are difficult for a small model to capture. 

Consequently, the model trained with LLM improvement supervision achieved only 10\% accuracy, compared to the 24\% achieved with the diffusion prior. Exploring code-based generation of intermediate steps remains a promising direction for future work to improve the algorithm's performance.

\end{document}